\newcommand{\cf}[1]{{\iffalse #1 \fi}}
\def\nnu{{\tilde \nu}}
\def\ws{\bar \omega^*}
\newcommand{\wii}{w^{(i)}}
\renewcommand{\AA}{\mathcal A}
\newcommand{\vd}[1]{\frac{\delta}{\delta #1}}
\newcommand{\vdd}[2]{\frac{\delta #1}{\delta #2}}
\newtheorem*{rep@theorem}{\rep@title}
\newcommand{\newreptheorem}[2]{%
\newenvironment{rep#1}[1]{%
 \def\rep@title{#2 \ref{##1}}%
 \begin{rep@theorem}}%
 {\end{rep@theorem}}}
\def\EE{\mathcal E}
\def\ro{r}
\renewcommand{\citet}{\citep}
\let\theta=\vartheta
\g@addto@macro{\endabstract}{\@setabstract}
\newcommand{\authorfootnotes}{\renewcommand\thefootnote{\@fnsymbol\c@footnote}}%
\begin{document}

\begin{center}
  \LARGE
  {Global optimality of softmax policy gradient\\with single hidden layer neural networks in the mean-field regime} \par \bigskip

  \Large
  \authorfootnotes
  Andrea Agazzi\textsuperscript{1}, Jianfeng Lu\textsuperscript{1}\textsuperscript{2} \par \bigskip
\large
  \textsuperscript{1}Department of Mathematics, Duke University \par
  \textsuperscript{2}Department of Physics and Department of Chemistry, Duke University\par \bigskip
\{agazzi,jianfeng\}@math.duke.edu
\end{center}

\begin{abstract}
  We study the problem of policy optimization for infinite-horizon discounted Markov Decision Processes with softmax policy and nonlinear function approximation trained with policy gradient algorithms. We concentrate on the training dynamics in the \emph{mean-field} regime, modeling \eg the behavior of wide single hidden layer neural networks, when exploration is encouraged through entropy regularization. The dynamics of these models is established as a Wasserstein gradient flow of distributions in parameter space.
  We further prove global optimality of the fixed points of this dynamics
  under mild conditions on their initialization.
\end{abstract}

\section{Introduction}

In recent years, deep reinforcement learning has revolutionized the world of
Artificial Intelligence by outperforming humans in a multitude of highly complex
tasks and achieving breakthroughs that were deemed unthinkable at least for the
next decade. Spectacular examples of such revolutionary potential have
appeared over the last few years, with reinforcement learning algorithms
mastering games and tasks of increasing complexity, from learning to walk
to the games of Go and Starcraft
\citep{mnih:13,mnih:15,Silver:16,Silver:17,Silver:18,Haarnoja:18,vinyals19}.

In most cases, the main workhorse allowing
 artificial intelligence to pass such unprecedented milestones was a
variation of a fundamental method to train reinforcement learning models:
\emph{policy gradient} (PG) algorithms \citep{sutton00}. This algorithm has a
disarmingly simple approach to the optimization problem at hand: given a
parametrization of the policy, it updates the parameters in the direction of
steepest ascent of the associated integrated value function.
Impressive progress has been made recently in the understanding of the convergence and optimization
properties of this class of algorithms in the tabular setting \citep{agarwal19, cen20, bhandari19}, in particular leveraging the natural tradeoff between exploration and exploitation offered for entropy-regularized rewards by softmax policies \citep{haarnoja18, mei20}.
However, this simple algorithm alone is not sufficient to explain the multitude of
recent breakthroughs in this field: in application domains such as Starcraft, robotics or movement planning,
 the space of possible states and actions are exceedingly large -- or even continuous --
and can therefore not be represented efficiently by tabular policies \citep{haarnoja18walk}.
Consequently, the recent impressive successes of artificial intelligence would be
impossible without the natural choice of neural networks
to approximate value functions and / or policy functions in
reinforcement learning algorithms \citep{mnih:15,sutton00}.

While neural networks, in
particular deep neural networks, provide a powerful and versatile tool
to approximate high dimensional functions on continuous spaces
\citep{Cybenko:89,Hornik:91,Barron:93}, their intrinsic nonlinearity
 poses significant obstacles to the theoretical understanding of their
training and optimization properties. For instance, it is known that the optimization
landscape of these models is highly nonconvex, preventing the use of most theoretical
tools from classical optimization theory.
For this reason, the unprecedented success of neural networks in artificial intelligence
stands in contrast with the poor understanding of these methods from a theoretical
perspective.
Indeed, even in the supervised setting, which can
be viewed as a special case of reinforcement learning, deep neural
networks are still far from being understood despite having been an important
and fashionable research focus in recent years. Only recently,
 a theory of neural network learning  has started to emerge,
including recent works on mean-field point of view of training dynamics
\citep{MeiMonNgu18,RotVE18,RotJelBruVE19,WeiLeeLiuMa18,ChizatBach18}
and on linearized dynamics in the over-parametrized
regime
\citep{hongler18,AZLiSong18,DuZhaiPocSin18,DuLeeLiWangZhai:18,ZouCaoZhouGu, AZLiLiang18,cb182,OymSol19,Montanari:19b,Lee19}.
More specifically to the context of reinforcement learning, some works  focusing on value-based learning \citep{agazzi19, wang19td, zhang20}, and others exploring the dynamics of policy gradient algorithms \citep{wang19pg} have recently appeared. Despite this progress, the theoretical understanding of deep reinforcement learning still poses a significant challenge to the machine learning community, and it is of crucial importance to
understand the convergence and optimization properties of such algorithms to bridge the gap
between theory and practice.


 \subsubsection*{Contributions.}  {The main goal of this work is to investigate entropy-regularized policy gradient dynamics for wide, single hidden layer neural networks. In particular, we give the following contributions:
\begin{itemize}
  \item We give a mean-field formulation of policy gradient dynamics in parameter space, describing the evolution of neural network parameters in the form of a transport partial differential equation (\abbr{pde}). We prove convergence of the particle dynamics to their mean-field counterpart. We further explore the structure of this problem by showing that such \abbr{pde} is a gradient flow in the Wasserstein space for the appropriate energy functional.
  \item We investigate the convergence properties of the above dynamics in the space of measures. In particular, we prove that under some mild assumptions on the initialization of the neural network parameters and on the approximating power of the nonlinearity, all fixed points of the dynamics are global optima, \ie the approximate policy learned by the neural network is optimal,
\end{itemize}

\subsubsection*{Related Works.}
  Recent progress in the understanding of the parametric dynamics of simple neural networks trained with gradient descent in the supervised setting has been made in \citet{MeiMonNgu18,RotVE18,WeiLeeLiuMa18,chizat19, ChizatBach20}. These results have further been extended to the multilayer setting in \citet{nguyen20}. In particular, the paper \citet{ChizatBach18} proves optimality of fixed points for wide single layer neural networks leveraging a Wasserstein gradient flow structure and the strong convexity of the loss functional \abbr{wrt} the predictor. We extend these results to the reinforcement learning framework, where the convexity that is heavily leveraged in \citet{ChizatBach18} is lost.
  We bypass this issue by requiring a sufficient expressivity of the used nonlinear representation, allowing to characterize global minimizer as optimal approximators.

  The convergence and optimality of policy gradient algorithms (including in the entropy-regularized setting) is investigated in the recent papers \citep{bhandari19, mei20, cen20, agarwal19}. These references establish convergence estimates through gradient domination bounds. In \citet{ mei20, cen20} such results are limited to the tabular case, while \citet{agarwal19, agarwal20} also discuss neural softmax policy classes, but under a different algorithmic update and assuming certain well-conditioning assumptions along training. Furthermore, all these results heavily leverage the finiteness of action space. In contrast, this paper focuses on the continuous space and action setting with nonlinear function approximation.

  Further recent works discussing convergence properties of reinforcement learning algorithms with function approximation via neural networks include \citet{wang19pg, wang19td}. These results only hold for finite action spaces, and are obtained in the regime where the network behaves essentially like a linear model (known as the neural or lazy training regime), in contrast to the results of this paper, which considers training in a nonlinear regime. We also note the work \citet{wang19lqr} where the action space is continuous but the training is again in an approximately linear regime. 
}


\section{Markov Decision Processes and policy gradients}

We denote a Markov Decision Process (\abbr{MDP}) by the $5$-tuple $(\SS, \mathcal A,P,r,\gamma)$, where $\SS$ is the state space, $\AA$ is the action space, $P = P(s,a,s')_{s,s' \in \SS, a \in \AA}$ a Markov transition kernel, $r(s,a,s')_{s,s' \in \SS, a \in \AA}$ is the real-valued, bounded and continuous immediate reward function and $\gamma \in (0,1)$ is a discount factor. We will consider a probabilistic \emph{policy}, mapping a state to a probability distribution on the action space, so that $\pi~:~\mathcal S \to \mathcal M_+^1(\mathcal A)$\,,  where $\mathcal M_+^1(\mathcal A)$ denotes the space of probability measures on $\AA$,
and denote for any $s \in \SS$ the corresponding density $\pi(s, \cdot )~:~\mathcal A \to \mathbb R_+$. The policy defines a state-to-state transition operator $P_\pi(s,\d s') = \int_\AA P(s,a,\d s') \pi(s, \d a)$, and we assume that $P_\pi$ is Lipschitz continuous as an operator $M_+^1(\SS) \to M_+^1(\SS)$ wrt the policy. We further encourage exploration by defining some (relative) \emph{entropy-regularized} rewards \citep{williams91}
\begin{equ}
  R_\tau(s,a,s') = r(s,a,s') - \tau D_{\mathrm{KL}} (\pi(s, \cdot); \bar \pi(\,\cdot\,))\,,
\end{equ}
where $D_{\mathrm{KL}}$ 
denotes the relative entropy, $\bar \pi$ is a reference measure and $\tau$ indicates the strength of regularization. Throughout, we choose $\bar \pi$ to be the Lebesgue measure on $\AA$, which we assume, like $\SS$, to be a compact set of the Euclidean space.  This regularization encourages exploration and absolute continuity of the policy \abbr{wrt} Lebesgue measure. Consequently, with some abuse of notation, we use throughout the same notation for a distribution and its density in phase space. Note that the original, unregularized MDP can be recovered in the limit $\tau \to 0$.\\
In this context, given a policy $\pi$ the associated \emph{value function} $V_\pi~:~\SS \to \Rr$ maps each state to the infinite-horizon expected discounted reward obtained by following the policy $\pi$ and the Markov process defined by $P$:
\begin{equs}\label{e:v}
  V_\pi(s) &= \Ex{\pi}{\sum_{t = 0}^\infty \gamma^t {R_\tau(s_t,a_t,s_{t+1})}\Big| s_0 = s}\\&= \Ex{\pi}{\sum_{t = 0}^\infty \gamma^t \bigl( r(s_t,a_t,s_{t+1})- \tau D_{\mathrm{KL}} (\pi(s_t, \cdot); \bar \pi(\,\cdot\,))\bigr)\Big|s_0 = s},
\end{equs}
  where $\Ex{\pi}{\,\cdot\,|s_0 = s}$ denotes the expectation of the stochastic process $s_t$ starting at $s_0 = s$ and following the (stochastic) dynamics defined recursively by the transition operator $P_\pi(s,\d s') = \int P(s,a,\d s') \pi(s,\d a)$.
Correspondingly, we define the $Q$-function $Q_\pi~:~\SS \times \AA \to \mathbb R$ as
\begin{equs}
  Q_\pi(s,a) &= {\Ex{\pi}{ r(s_0,a_0,s_1) + \sum_{t = 1}^\infty \gamma^{t} R_\tau(s_t,a_t,s_{t+1})\Big|s_0=s, a_0 = a}}\\
  & = \bar r(s,a) + \gamma \Ex{\pi}{ V_\pi(s_1 )\Big|s_0=s, a_0 = a}, \label{e:qpi}
\end{equs}
where $\bar r(s,a) = \mathbb E[r(s,a,s')]$ is the average reward from $(s,a)$. Conversely, from the definition, we have the identity for $V_\pi(s ) = \Ex{\pi}{Q_\pi(s_0,a_0)|s_0 = s} - \tau D_{\mathrm{KL}} (\pi(s, \cdot); \bar \pi(\,\cdot\,))$.

We are interested in learning the optimal policy $\pi^*$ of a given \abbr{mdp} $(\SS, \mathcal A,P,r,\gamma)$, which satisfies for all $s \in \SS$
\begin{equ}\label{e:extremum}
  V_{\pi^*}(s) = \max_{\pi~:~\SS \to \mathcal M_+^1(\mathcal A)} V_\pi(s).
\end{equ}
More specifically we would like to estimate this function through a family of approximators $\pi_w~:~\SS \to \mathcal M_+^1(\AA)$
parametrized by a vector $w \in \WW := \Rr^p$.

A popular algorithm to solve this problem is given by \emph{policy gradient} algorithm \citep{SuttonBarto:18}.
Starting from an initial condition
$w(0) \in \WW$, 
this algorithm
updates the parameters $w$ of the predictor in the direction of steepest ascent of the average reward
\begin{equ}\label{e:pgu1}
w(t+1) := w(t) + \beta_t \nabla_w \tilde{\mathbb E}_{s\sim \rho_0}{V_{\pi}(s)}\,,
\end{equ}
for a \emph{fixed} absolutely continuous initial distribution of initial states $\rho_0 \in \mathcal M_+^1(\SS)$  and sequence of time steps $\{\beta_t\}_t$. Here $\tilde{\mathbb E}[\cdot]$ denotes an approximation of the expected value operator.

This work investigates the  regime of asymptotically small constant
step-sizes $\beta_t \to 0$. In this adiabatic limit, the stochastic
component of the dynamics is averaged before the parameters of the
model can change significantly.  This allows to consider the
parametric update as a deterministic dynamical system emerging from the
averaging of the underlying stochastic algorithm corresponding to the limit of infinite sample sizes.
This is known as the ODE
method \citep{Borkar:09} for analyzing stochastic approximation. We focus on the analysis
of this deterministic system to highlight the core dynamical properties of policy gradients with
nonlinear function approximation.  The averaged, deterministic dynamics is given by the set of
\abbr{ode}s
\begin{equ}\label{e:pguc}
  \frac{\d}{\d t} w(t) =  \Ex{{s\sim \rho_0}}{\nabla_w {V_{\pi}(s)}} = \Ex{s\sim \rho_\pi, a\sim \pi_w}{\nabla_w \log \pi_w(s,a)\pc{Q_{\pi}(s,a)- \tau \log(\pi_w(s,a))}}\,,
\end{equ}
where in the second equality we have applied the policy gradient theorem \citet{sutton00,SuttonBarto:18}, defining for a fixed $\rho_0 \in \mathcal M_+^1(\SS)$
\begin{equ}\label{e:rhopi}
\rho^\pi(s_0,s) := \sum_{t=0}^\infty \gamma^t  P_\pi^t(s_0,s)\,, \qquad \rho_\pi(s) = \int_\SS \rho^\pi(s_0,s) \rho_0(\d s_0)\,,
\end{equ}
as the (improper) discounted empirical measure. For completeness, we include a derivation of \eref{e:pguc} in Appendix~\ref{appendix:derivation}.

\subsubsection*{Softmax policies in the mean-field regime}
We choose to represent our policy as a softmax policy:
\begin{equ}
  \pi_w(s,a) = \frac{\exp(f_w(s,a))}{\int_{\mathcal A} \exp(f_w(s,a)) \d a}
\end{equ}
and parametrize the \emph{energy} $f$ as a  two-layer neural network in the \emph{mean-field} regime, \ie
\begin{equ}
  f_{w}(s,a) = \frac 1 N \sum_{i=1}^N \psi(s,a;w_i)
\end{equ}
for a fixed, usually nonlinear function $\psi~:~\SS \times \AA \times \Omega \to \mathbb R$, where we have separated $w \in \WW$ into $N$ identical components $w_i \in \Omega$ so that $\mathcal W = \Omega^N$.

We can rewrite the above expression in terms of an empirical measure:
\begin{equ}\label{e:mfpolicy}
  f_{\nu^{(N)}}(s,a) := \int_{\Omega} \psi(s,a;\omega ) \nu^{(N)}(\d \omega)  \qquad \text{where } \quad\nu^{(N)}(\d \omega ) = \frac 1 N \sum_{i=1}^N \delta_{\wii}(\d \omega) \in \mathcal M_+^1(\Omega).
  \end{equ}
This empirical measure representation removes the symmetry of the approximating functions under permutations of parameters $w_i$. It also facilitates the limit $N \to \infty$, when $\nu^{(N)} \to \nu $ weakly, so that $f_{\nu^{(N)}} \to f_{\nu}$. Then, for a general distribution $\nu \in \mathcal M_+^1(\Omega)$ the softmax mean-field policy reads:
\begin{equ}\label{e:softmax}
  \pi_\nu(s,a) = \frac{\exp \pc{\int_{\Omega} \psi(s,a;\omega ) \nu(\d \omega)}}{\int_{\mathcal A} \exp \pc{\int_{\Omega} \psi(s,a;\omega ) \nu(\d \omega)} \d a}.
\end{equ}
Note that by our choice of softmax policy and mean-field   parametrization \eref{e:mfpolicy} we have
\begin{equs}
    \nabla_{w_i} \log \pi_{\nu^{(N)}}(s,a)& = \nabla_{w_i} f_{\nu^{(N)}}(s,a) - \nabla_{w_i} \log \int_\AA \exp f_{\nu^{(N)}}(s,a) \d a\\
    &=\nabla_{w_i} \frac 1 N \sum_{i=1}^N \psi(s,a;w_i) - \frac {\int_\AA \nabla_{w_i} \exp \pq{\frac 1 N \sum_{i=1}^N \psi(s,a;w_i)} \d a}{\int_\AA \exp f_{\nu^{(N)}}(s,a) \d a}\\
    &=\frac 1 N \pc{\nabla_{w_i}\psi(s,a;w_i) - \int_\AA \nabla_{w_i} \psi(s,a;w_i) \pi_{\nu^{(N)}}(s,\d a)}.
\end{equs}
Thus the training dynamics \eqref{e:pguc}, after an appropriate rescaling of time ($t \mapsto t/N$, which is due to the mean-field parametrization for $f_w$), can be rewritten as
\begin{equs}
  \frac{\d}{\d t} w_i(t) = \int_{\SS\times \mathcal A} \bigl(\nabla_{w_i} \psi(s, a; w_i) & - \Ex{ \pi_{\nu^{(N)}}} {\nabla_{w_i}\psi(s, \cdot; w_i)} \bigr) \times \\
  & \times \bigl(Q_{\pi_{\nu^{(N)}}}(s,a)- \tau \log \pi_{\nu^{(N)}}(s,a)\bigr) \pi_{\nu^{(N)}}(s,\d a) \rho_{\pi_{\nu^{(N)}}}(\d s). \label{e:mfode}
\end{equs}
The training dynamics can be more compactly represented by the evolution of the measure $\nu\in \mathcal M_+^1(\Omega)$ in parameter space, given by a \emph{mean-field} transport partial differential equation of the Vlasov type as
\begin{equ}\label{e:mfpde}
    \frac {\d}{\d t} \nu_t(\omega) = \mathrm{div}\pc{\nu_t(\omega) \int_{\SS\times \mathcal A}   C_{\pi_{\nu}}[ \nabla_\omega\psi(s,\cdot;\omega),\,Q_{\pi_{\nu}} - \tau \log \pi_\nu](s)  \rho_{\pi_{\nu}}(\d s) },
\end{equ}
where $\omega \in \Omega$  and we have introduced the shorthand $C_\pi[f,g](s)$ to denote the covariance operator  \abbr{wrt} the probability measure $\pi(s,\d a)$.
Note that the above partial differential equation also captures the dynamics of the finite-width system, \ie of the empirical measure $\nu^{(N)}$ where each $w_i$ follows  \eref{e:mfode}.


We further note that the dynamics introduced above have a gradient flow structure in the probability space $\mathcal{M}_+^1(\Omega)$:  defining the expected value function
\begin{equ}\label{e:energy}
  \mathcal E[\nu] = \Ex{s_0 \sim \rho_0}{V_{\pi_{\nu}}(s_0)}
\end{equ}
the dynamics \eref{e:mfpde} is a gradient flow for $\mathcal{E}$ in the Wasserstein space (see e.g., \citep{santambrogio2017euclidean} for an introduction), as we prove in the appendix:
\begin{proposition}\label{l:wgf}
  For a fixed initial distribution $\rho_0 \in \mathcal M_+^1(\SS)$, the dynamics \eref{e:mfpde} is the Wasserstein gradient flow of the energy functional \eref{e:energy}.
\end{proposition}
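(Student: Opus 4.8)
The plan is to verify the statement via the standard characterization of Wasserstein gradient flows: a continuity/transport equation of the form $\partial_t \nu_t = \pm\,\mathrm{div}\bigl(\nu_t\,\nabla_\omega \tfrac{\delta \mathcal F}{\delta \nu}\bigr)$ is the Wasserstein gradient flow of the functional $\mathcal F$ (see e.g. \citep{santambrogio2017euclidean}). Since \eqref{e:mfpde} is already written in transport form, it suffices to (i) compute the first variation $\tfrac{\delta \mathcal E}{\delta \nu}(\omega)$ of the energy \eqref{e:energy}, and (ii) check that its $\omega$-gradient coincides, up to the sign fixed by the convention for $\mathrm{div}$, with the velocity field $\int_{\SS\times\AA} C_{\pi_\nu}[\nabla_\omega\psi(s,\cdot;\omega),\,Q_{\pi_\nu}-\tau\log\pi_\nu](s)\,\rho_{\pi_\nu}(\d s)$ appearing in \eqref{e:mfpde}. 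Because the objective is being \emph{maximized}, the relevant object is the gradient \emph{ascent} flow, equivalently the Wasserstein gradient flow of $-\mathcal E$.

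The core of the argument is the computation in (i), which I would carry out by exploiting the chain of dependencies $\nu \mapsto f_\nu \mapsto \pi_\nu \mapsto V_{\pi_\nu}$. The first link is linear, giving immediately $\tfrac{\delta f_\nu(s,a)}{\delta \nu}(\omega) = \psi(s,a;\omega)$. For the last two links I would invoke the entropy-regularized policy gradient theorem already used to obtain \eqref{e:pguc} (and derived in the appendix): differentiating the integrated value function with respect to a perturbation acting on $\log \pi_\nu$ produces the score-function expression weighted by $Q_{\pi_\nu}-\tau\log\pi_\nu$ and integrated against the discounted occupancy measure $\rho_{\pi_\nu}$. Combined with the explicit softmax computation
\begin{equ}
  \frac{\delta \log \pi_\nu(s,a)}{\delta \nu}(\omega) = \psi(s,a;\omega) - \mathbb E_{a'\sim \pi_\nu(s,\cdot)}\bigl[\psi(s,a';\omega)\bigr],
\end{equ}
which is the $\nu$-analogue of the identity for $\nabla_{w_i}\log\pi_{\nu^{(N)}}$ displayed just before \eqref{e:mfode}, this yields
\begin{equ}
  \frac{\delta \mathcal E}{\delta \nu}(\omega) = \int_{\SS\times\AA} \bigl(\psi(s,a;\omega) - \mathbb E_{\pi_\nu}[\psi(s,\cdot;\omega)]\bigr)\bigl(Q_{\pi_\nu}(s,a)-\tau\log\pi_\nu(s,a)\bigr)\,\pi_\nu(s,\d a)\,\rho_{\pi_\nu}(\d s).
\end{equ}

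It then remains to take the gradient in $\omega$ and recognize the integrand as the covariance operator $C_{\pi_\nu}$: since $C_{\pi_\nu}[\nabla_\omega\psi,\,g](s) = \mathbb E_{\pi_\nu}\bigl[(\nabla_\omega\psi - \mathbb E_{\pi_\nu}[\nabla_\omega\psi])\,g\bigr]$ and differentiation in $\omega$ commutes with the $a$- and $s$-integrals under mild regularity, $\nabla_\omega \tfrac{\delta\mathcal E}{\delta\nu}(\omega)$ is exactly the velocity field of \eqref{e:mfpde}. This establishes the gradient-flow form and hence the claim.

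I expect the main obstacle to be the rigorous justification of step (i) --- specifically, applying the policy gradient theorem at the level of functional (measure) derivatives rather than finite-dimensional parameters. The subtlety is that $Q_{\pi_\nu}$ and $\rho_{\pi_\nu}$ themselves depend on $\nu$, and one must show that this implicit dependence contributes no extra terms; this is precisely the content of the policy gradient theorem, resting on the Bellman-equation cancellations, but it requires controlling the interchange of the first variation with the infinite-horizon discounted sum and the expectations. Here I would lean on the standing regularity assumptions (bounded continuous reward, Lipschitz dependence of $P_\pi$ on the policy, and smoothness of $\psi$) together with the contraction $\gamma\in(0,1)$ to justify differentiating \eqref{e:v}--\eqref{e:qpi} termwise and to ensure $\tfrac{\delta\mathcal E}{\delta\nu}$ is well-defined and continuous in $\omega$. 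A final bookkeeping point is fixing the sign/divergence convention so that the maximization dynamics matches the gradient-ascent orientation.
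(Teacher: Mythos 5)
Your proposal is correct, and it arrives at the same potential
\begin{equ}
  \frac{\delta \mathcal E}{\delta \nu}(\omega) \;=\; \int_{\SS\times\AA} C_{\pi_\nu}\bigl[\psi(s,\cdot\,;\omega),\, Q_{\pi_\nu}-\tau\log\pi_\nu\bigr](s)\,\rho_{\pi_\nu}(\d s)
\end{equ}
that the paper's proof establishes, but by a different decomposition of the chain rule. The paper factors $\frac{\delta\mathcal E}{\delta\nu}=\int \frac{\delta\mathcal E}{\delta\pi}\cdot\frac{\delta\pi_\nu}{\delta\nu}$ and computes $\frac{\delta\mathcal E}{\delta\pi}$ from scratch: differentiating the occupancy-measure representation $\mathcal E=\int(\bar r-\tau\log\pi)\,\rho_\pi\,\pi$ forces an explicit computation of $\frac{\delta\rho^\pi}{\delta\pi}$ through a recursion that is then resummed, and the result is recognized as $(Q_\pi-\tau\log\pi)\rho_\pi$ up to terms constant in $a$, which vanish when integrated against the centered factor $\frac{\delta\pi_\nu}{\delta\nu}$. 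You instead route through $\nu\mapsto\log\pi_\nu\mapsto\mathcal E$ and re-apply the entropy-regularized policy gradient theorem (the appendix lemma behind \eref{e:pguc}) at the level of variations in $\nu$: since that proof only unrolls the Bellman recursion for $V$ and uses the chain rule plus the normalization $\int_\AA \pi(s,\d a)=1$, it is parametrization-agnostic and tolerates the substitution $\nabla_w \to \frac{\delta}{\delta\nu}$, so you never differentiate $\rho^\pi$ at all; the only new computation is the softmax identity for $\frac{\delta\log\pi_\nu}{\delta\nu}$, which you state correctly. Your route buys economy and reuse (the interchange of the variation with the discounted sum is justified exactly as in the appendix lemma, by boundedness of $r$, $\psi$ and $\gamma<1$, as you note), while the paper's route buys a standalone formula for $\frac{\delta\mathcal E}{\delta\pi}$ (its \eref{e:variationald4}), which it reuses later, e.g.\ in the proof of Lipschitz smoothness of $R$ in \lref{l:boundong}. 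Finally, your care with the sign/divergence convention is warranted rather than pedantic: since $\mathcal E$ is being maximized, the flow matching the particle dynamics \eref{e:mfode} is the ascent flow, equivalently the Wasserstein gradient flow of $-\mathcal E$, and the paper itself is loose on this point (compare the sign in \eref{e:mfpde} with the transport equation $\frac{\d}{\d t}\nu_t = -\mathrm{div}(\nu_t F_t)$ used in \sref{s:bandits}).
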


Analogous dynamics equation for evolution of parameter space measure in the supervised learning case has been derived in \citep{MeiMonNgu18, RotVE18, ChizatBach18} and in the \abbr{td} learning case in \citep{agazzi19, zhang20}. In particular, in the case of supervised learning, the resulting dynamics is a Wasserstein gradient flow, the structure of which is used
to obtain the convergence of the particle system to the mean-field dynamics. In our case, however, the energy functional is not convex \abbr{WRT} the policy and moreover the softmax parametrization destroys the convexity of the approximator of the policy with respect to $\nu_t$. Thus showing convergence of the dynamics becomes much more challenging.

\section{Simplified setting: the bandit problem} \label{s:bandits}

We now introduce our results in the simple bandit setting in, where state space $\SS$ is one point (and will be henceforth suppressed in the notation) and without loss of generality action space $\mathcal A$ is continuous.
In this case, for a reward function $r$ and a softmax policy 
\begin{equ}
  {\pi_\nu(a) = \frac{\exp(f_\nu(a))}{\int_{\mathcal A} \exp(f_\nu(a)) \d a}}
\end{equ}
we have that the value function for the regularized problem reads (we denote $V_{\nu} = V_{\pi_{\nu}}$ to simplify notation)
\begin{equ}
  V_\nu = \int (r(a) - \tau \log \pi_\nu(a)) \pi_\nu(\d a)\,,
\end{equ}
while the $Q$ function is simply $Q(a) = r(a)$.
We further note that the optimal policy in the regularized case reads:
\begin{equ}
  \pi^*(a) = Z^{-1}\exp(\tau^{-1} r(a)), \qquad Z = \int_{\mathcal A} \exp(\tau^{-1} r(a))\d a
\end{equ}
Recalling the definition of the covariance operator $C_\pi[f, g](s)$ from \eref{e:mfpde}, the expression for the policy gradient vector field in the latter case simplifies to
\begin{equs}
  \partial_t \omega_t := F_t(\omega_t; \nu_t ) & = \nabla_\omega D\pi_\nu D V_\nu =  C_{\pi_{\nu}} [\nabla_\omega \psi(a;\omega),r-\tau \log(\pi_\nu)]\\
  & = \int_\AA\pc{\nabla_\omega \psi(a;\omega) - \int \nabla_\omega \psi(a';\omega) \pi_\nu(\d a')} \pc{r(a) - \tau f_\nu(a)} \pi_\nu(\d a) \,,\label{e:banditvf}
  \end{equs}
  where $D\pi_\nu, D V_\nu$ denote the Fr\'ech\'et derivative of $\pi_\nu$ and $V_\pi$ \abbr{wrt} $\nu$ and $\pi$ respectively.
 Note that by the structure of the covariance operator $C_\pi$, adding a constant to the function $f_\nu(\cdot)$ does not affect the dynamics. This reflects the fact that the softmax policy is normalized by definition.

\subsection{Global optimality of softmax policy gradient}
We now sketch the main steps in proving that the mean-field policy gradient dynamics converge, under appropriate assumptions, to global optimizers. The proof in this simpler setting is much more transparent than the general case to be discussed in the next section, and will provide some intuitions for the latter. The first part of the proof concerns the properties of fixed points of the dynamics \eref{e:mfpde}, while the second part concerns the training dynamics.

\subsubsection*{Statics} {We first informally prove global optimality of any fixed point $\nu^*$ of the transport equation $\frac{\d } {\d t} \nu_t = - \text{div}(\nu_t F_t)$ with $F_t$ from \eref{e:banditvf} such that
\begin{enumerate}[a)]
\item $\nu^*$ has full support in $\Omega$,
\item The nonlinearity is 1-homogeneous in the first component of its parameters, \ie that writing $\omega = (\omega_0, \bar \omega) \in \Rr \times \Theta$
one has $\psi(a; \omega) := \omega_0 \phi(a ; \bar \omega)$ for a regular enough $\phi: \AA \times \Theta \to \Rr$,
\item The span of $\{\phi(a;\bar \omega)\}_{\bar \omega \in \Theta}$ is \emph{dense} in $L^2(\AA)$,
\end{enumerate}
so that \ie $\pi_{\nu^*}(a) = \pi^*(a) = Z^{-1} \exp\pq{\tau^{-1} r(a)}$.
Weaker assumptions and the general statement are given in the next section (\pref{p:mf}) while the general proof appears in the appendix.

First, we note that by assumption a), $ \text{div}(\nu^* F(\,\cdot\,; \nu^*)) = 0$ directly implies that for all $\omega \in \Omega$
\begin{equ}
F(\omega;\nu^*) = \int_\AA {\nabla_\omega \psi(a;\omega) }  \pc{r(a) - \tau f_{\nu^*}(a) - V_{\nu^*}} \pi_{\nu^*}(\d a) = 0\,.
\end{equ}
In particular, by homogeneity assumption b), the first component of the above vector field must vanish on $\Theta$.
\begin{equ}
  \int_\AA \phi(a;\bar \omega)  \pc{r(a) - \tau f_{\nu^*}(a) - V_{\nu^*}} \pi_{\nu^*}(\d a) = 0\,.
\end{equ}
By assumption c) that span of $\phi$ is dense in $L^2(\AA)$ the above implies that
\begin{equ}\label{e:banditspurious}
  {r(a) - \tau f_{\nu^*}(a) - V_{\nu^*}}  = 0 \qquad \pi_{\nu^*}\text{-a.e. in } \mathcal A\,.
\end{equ}
Finally, recalling that by the softmax parametrization and by the boundedness of $\phi$, $\pi_{\nu^*}(a)> 0$, we must have
\begin{equ}
   f_{\nu^*}(a) = \tau^{-1} r(a) + C
\end{equ}
which directly implies the optimality of the policy.}

\subsubsection{Dynamics} While it is clear that assumption b) and c) about the structure and approximating power of the nonlinearity $\psi$  hold independently of $t$, we want to show that assumption a) also holds uniformly in time. In this sense, the continuity of the vector field \eref{e:banditvf} will preserve the full support properties of the measure $\nu_t$ for all $t>0$, as we will prove in a more general framework in \lref{l:support}. Consequently, any measure $\nu$ respecting assumption a) at initialization will do so for any finite positive time $t>0$. However, the question remains of whether this property still holds at $t=\infty$.
This is the object of \lref{l:spurious}, where we prove that whenever the gradient approaches a fixed point in parameter space, if this fixed point is not a global minimizer, it must be avoided by the dynamics, and thus, the only possible fixed points of the dynamics are global minimizers.

\section{Results in the general setting}

We now come back to the general \abbr{MDP} framework introduced in Section 2.

%

\subsection{Assumptions}
To state the main result of this section, the optimality of fixed points of \eref{e:mfpde} we need the following
\begin{assumption} \label{a:mf} Assume that $\omega = (\omega_0, \bar \omega) \in \Rr \times \Theta$ for $\Theta =  \Rr^{m-1}$ and $\psi(s,a; \omega) = \omega_0 \phi(s,a;\bar \omega)$ with
  \begin{enumerate}[a)]
    \item {\rm Regularity of $\phi$:}
    $\phi$ is bounded, differentiable and $D\phi_\omega$ is Lipschitz. Also, for all $f \in L^2(\SS \times \AA)$ the regular values of the map $\bar \omega \mapsto g_f(\bar \omega):=  \int f(s,a) \phi(s,a; \bar \omega)$ are dense in its range, and 
     $g_f(r\bar \omega) $ converges in $C^1(\{\bar \omega \in \Theta:\|\bar \omega\|_2=1\})$ as $r \to \infty$ to a map $\bar g_f(\bar \omega)$ whose regular values are dense in its range.
    \item {\rm Universal approximation}: the span of $\{\phi(\cdot, \bar \omega)\,:\,\bar \omega \in \Theta \}$ is dense in $L^2(\SS\times \AA)$;
    \item {\rm Support of the measure}: There exists $\ro > 0$ s.t. the support of the initial condition $\nu_0$ is contained in $\mathcal Q_r := [-\ro,\ro]\times \Theta$ and separates $\{-\ro\}\times \Theta$ from $\{\ro\}\times \Theta$, \ie any continuous path connecting $\{-\ro\}\times \Theta$ to $\{\ro\}\times \Theta$ intersects the support of $\nu_0$.
  \end{enumerate}
\end{assumption}

\aref{a:mf} a) is a common, {technical} regularity assumption
ensuring that \eref{e:mfpde} is well behaved and controlling the growth, variation and regularity of $\phi$. Alternative assumptions on the case $\Theta \neq \Rr^{m-1}$ are given in the appendix.
 \aref{a:mf} b) speaks to the approximating power of the nonlinearity, assumed to be expressive enough to approximate any function in $L^2(\mathcal S \times \AA)$. This condition replaces the convexity assumption from \cite{ChizatBach18}, as the lack of convex structure in our setting prevents us from identifying the local and global minimization properties of a fixed point. Indeed, despite the one-point convexity of $\Ex{\rho}{V_\pi(s)}$ as a functional of $\pi$ \citep{kakade02} which can be leveraged in the tabular case, such property will be lost, in general, when restricting to policies through nonlinear function approximation. We bypass this issue by requiring sufficient expressivity of the approximating function class, guaranteeing that the optimal policy can be represented with arbitrary precision. Similar assumption on approximability of neural network representation was made in recent analysis of natural policy gradient algorithm \citep{agarwal19}. We note that this assumption is easily satisfied by widely used nonlinearities by the universal approximation theorem \citep{Cybenko:89,Barron:93}.  Finally, \aref{a:mf} c) guarantees that the initial condition is such that the expressivity from b) can actually be exploited.

 \subsection{Convergence of the many-particle limit}

 Before discussing the optimality properties of the dynamics \eref{e:mfpde}, we show that this \abbr{pde} accurately describes the policy gradient dynamics of a sufficiently wide, single layer neural network. To this aim, we let $\mathcal P_2(\Omega)$ be the space of probability distributions on $\Omega$ with finite second moment.

 \begin{theorem}\label{t:particles}
   Let \aref{a:mf} hold and let $w_t^{(N)}$ be a solution of \eref{e:pguc} with initial condition $w_0^{(N)} \in \mathcal W = \Omega^N$. If $\nu_0^{(N)}$ converges to $\nu_0 \in \mathcal P_2(\Omega)$ in Wasserstein distance $W_2$ then $\nu_t^{(N)}$ converges, for every $t>0$, to the unique solution $\nu_t$ of \eref{e:mfpde}.
 \end{theorem}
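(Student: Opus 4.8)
The plan is to establish propagation of chaos for the interacting particle system \eqref{e:mfode} toward the mean-field \abbr{pde} \eqref{e:mfpde} via a coupling argument combined with Gr\"onwall's inequality. The standard strategy for such results is to show that the empirical measure dynamics and the mean-field dynamics are both driven by vector fields that are Lipschitz continuous (in an appropriate sense) in the measure argument, and then to control the Wasserstein distance $W_2(\nu_t^{(N)}, \nu_t)$ by its value at initialization. First I would verify that the vector field $F$ appearing in \eqref{e:mfpde}, namely
\begin{equ}
  F(\omega;\nu) = \int_{\SS\times \mathcal A}   C_{\pi_{\nu}}[ \nabla_\omega\psi(s,\cdot;\omega),\,Q_{\pi_{\nu}} - \tau \log \pi_\nu](s)  \rho_{\pi_{\nu}}(\d s)\,,
\end{equ}
is globally Lipschitz in $\omega$ (uniformly over measures with support in a fixed compact region) and Lipschitz in $\nu$ with respect to $W_2$. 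This is where \aref{a:mf}~a) enters: the boundedness and Lipschitz regularity of $\phi$ and $D\phi$ control the $\omega$-dependence, while the assumed Lipschitz dependence of $P_\pi$ on the policy, together with boundedness of $r$, controls the dependence of $Q_\pi$, $\log\pi_\nu$, and $\rho_\pi$ on $\nu$.

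The core of the argument is a synchronous coupling. I would couple the $N$-particle system $w_i(t)$ solving \eqref{e:mfode} with $N$ independent mean-field trajectories $\bar w_i(t)$ each solving $\dot{\bar w}_i = F(\bar w_i; \nu_t)$ with $\bar w_i(0) = w_i(0)$, driven by the \emph{same} initialization. Writing the difference and using the splitting
\begin{equs}
  \frac{\d}{\d t}\bigl(w_i - \bar w_i\bigr)
  &= \bigl(F(w_i;\nu_t^{(N)}) - F(\bar w_i;\nu_t^{(N)})\bigr) + \bigl(F(\bar w_i;\nu_t^{(N)}) - F(\bar w_i;\nu_t)\bigr),
\end{equs}
the first bracket is controlled by the $\omega$-Lipschitz bound and the second by the $\nu$-Lipschitz bound evaluated at $W_2(\nu_t^{(N)},\nu_t) \le \bigl(\tfrac1N\sum_i \|w_i - \bar w_i\|^2\bigr)^{1/2}$. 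Summing over $i$ and applying Gr\"onwall yields $W_2(\nu_t^{(N)}, \nu_t) \le e^{Lt}\, W_2(\nu_0^{(N)}, \nu_0)$, which tends to zero by hypothesis. Uniqueness of the limit follows from the same Lipschitz estimates applied to two solutions of \eqref{e:mfpde} with equal initial data.

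I expect the main obstacle to be establishing the $W_2$-Lipschitz dependence of the vector field on $\nu$, since the $Q$-function $Q_{\pi_\nu}$ depends on $\nu$ in a highly nonlinear and nonlocal way: it is defined through an infinite-horizon discounted expectation under the policy-induced transition kernel $P_{\pi_\nu}$, and the discounted occupation measure $\rho_{\pi_\nu}$ enters as well. Controlling these requires showing that the map $\nu \mapsto \pi_\nu$ is Lipschitz (which follows from the softmax structure and boundedness of $\psi$, giving control of $f_\nu$ in $\sup$-norm hence of $\pi_\nu$ in total variation), then propagating this through the Lipschitz continuity of $P_\pi$ assumed in Section~2 to obtain a contraction-type bound on $V_{\pi_\nu}$ and $Q_{\pi_\nu}$ using the discount factor $\gamma<1$. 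A secondary technical point is ensuring the particle trajectories remain in a compact set so that the Lipschitz constants are genuinely uniform; this should follow from an a priori bound on $\|F\|$ that keeps the supports bounded on any finite time interval, consistent with the localization $\mathcal Q_r$ of \aref{a:mf}~c).
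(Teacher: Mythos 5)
Your synchronous-coupling-plus-Gr\"onwall strategy is a genuinely different route from the paper's. The paper first establishes existence and uniqueness of the solution to \eref{e:mfpde} through Wasserstein gradient-flow theory (semiconvexity of $\mathcal E$ along geodesics plus existence of strong subdifferentials, then a citation of Ambrosio--Gigli--Savar\'e), and then proves convergence by a soft compactness argument: equicontinuity of $t\mapsto \nu_t^{(N)}$ in $W_2$ via the energy-dissipation identity, extraction of a limit by Arzel\`a--Ascoli, and identification of the limit by testing the two dynamics against bounded continuous functions, using Lipschitz continuity of the drift in the bounded-Lipschitz norm of the measure. Your coupling route, when correctly implemented, is stronger in one respect: it yields the quantitative stability bound $W_2(\nu_t^{(N)},\nu_t)\le C(t)\,W_2(\nu_0^{(N)},\nu_0)$ rather than subsequential convergence. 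Moreover, your diagnosis of the main technical burden --- Lipschitz dependence of the drift on $\nu$ through the chain $\nu\mapsto f_\nu\mapsto\pi_\nu\mapsto (Q_{\pi_\nu},\rho_{\pi_\nu})$, with the discount factor $\gamma<1$ supplying the contraction --- is exactly what the paper proves in its auxiliary lemmas (there stated in the bounded-Lipschitz norm, which is dominated by $W_1\le W_2$ for measures supported in $\mathcal Q_r$, so your $W_2$-Lipschitz claim does follow).

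However, two steps need repair. First, the inequality $W_2(\nu_t^{(N)},\nu_t)\le\bigl(\tfrac1N\sum_i\|w_i-\bar w_i\|^2\bigr)^{1/2}$ is false as written: the empirical measure of your coupled particles $\bar w_i(t)$ is the pushforward $X(t,\cdot)_{\#}\nu_0^{(N)}$ of the \emph{empirical} initial measure under the mean-field flow, not $\nu_t = X(t,\cdot)_{\#}\nu_0$, because the initial conditions $\bar w_i(0)=w_i(0)$ are not distributed according to $\nu_0$ (they are deterministic points whose empirical measure merely converges to $\nu_0$). You need the triangle inequality $W_2(\nu_t^{(N)},\nu_t)\le \bigl(\tfrac1N\sum_i\|w_i-\bar w_i\|^2\bigr)^{1/2} + W_2\bigl(X(t,\cdot)_{\#}\nu_0^{(N)},\,X(t,\cdot)_{\#}\nu_0\bigr)$, where the second term is controlled by the Lipschitz constant of the flow map (itself a Gr\"onwall estimate, valid on $\mathcal Q_r$); the Gr\"onwall loop then closes with this inhomogeneous forcing. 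Equivalently, and more cleanly, note that $\nu_t^{(N)}$ is itself a weak solution of \eref{e:mfpde} (the particles are the characteristics of their own empirical measure, as the paper observes), so the whole theorem reduces to a two-solution $W_2$-stability estimate for the \abbr{pde}. Second, your coupling presupposes that the mean-field solution $\nu_t$ exists in order to define the trajectories $\bar w_i$, yet existence and uniqueness are part of the theorem's claim; the paper devotes the first half of its proof to this via the gradient-flow machinery, whereas in your framework you must construct $\nu_t$ separately, e.g.\ by a Picard fixed-point iteration on curves in $\mathcal P_2(\mathcal Q_r)$, which your Lipschitz estimates would support but which the proposal never carries out. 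Relatedly, the a priori confinement of all supports to a compact $\mathcal Q_r$ on finite time horizons must be proven \emph{before} any of the uniform Lipschitz constants are invoked (the paper does this with a linear-growth bound on $\|F\|$ plus Gr\"onwall), not appended as a "secondary technical point" at the end.
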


 We note that by the law of large numbers for empirical distributions, the condition of convergence of $\nu_0^{(N)}$ to $\nu_0$ is \eg satisfied when $w_0^{(i)}$ are drawn independently at random from $\nu_0$.

 The proof of this result is largely standard, under the given assumptions, and is provided in the appendix for completeness. The idea of the proof is a canonical propagation of chaos argument \citep{sznitman91}. 
 In a nutshell the first step of the argument establishes sufficient regularity of the gradient dynamics, allowing to guarantee existence and uniqueness of the solution to \eref{e:mfpde}. Then, one bounds the difference in differential updates for the particle system and the mean-field dynamics by comparing them with the evolution of the particle system according to a \emph{linear, time-inhomogeneous} \abbr{pde} using the drift term of the mean-field model. The proof is finally concluded by application of Gronwall inequality.
\subsection{Optimality}

After discussing the connection between particle dynamics and mean-field equations, we present the main convergence result of this paper:
\begin{theorem}\label{t:mf}
  Let \aref{a:mf} hold and $\nu_t$ given by \eqref{e:mfpde} converge
  to $\nu^*$, then $\pi_{\nu^*} = \pi^*$.
\end{theorem}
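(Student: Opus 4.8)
The plan is to follow the two-step structure of the bandit argument (Statics then Dynamics), now applied to the full covariance vector field of \eref{e:mfpde}. Since $\nu_t \to \nu^*$ along the gradient flow, the limit $\nu^*$ is stationary, $\mathrm{div}(\nu^* F(\,\cdot\,;\nu^*)) = 0$, where $F(\omega;\nu) = \int_\SS C_{\pi_\nu}[\nabla_\omega\psi(s,\cdot;\omega),\,Q_{\pi_\nu} - \tau\log\pi_\nu](s)\,\rho_{\pi_\nu}(\d s)$. Granting for the moment that $\nu^*$ has full support, continuity of $F$ upgrades stationarity to $F(\omega;\nu^*) = 0$ for every $\omega$, and the argument reduces to the statics computation below; the main work is then to verify the full-support hypothesis at the limit.

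\textbf{Statics.} By the $1$-homogeneity of \aref{a:mf} a) the first ($\omega_0$) component of $F(\,\cdot\,;\nu^*)=0$ reads $g(\bar\omega) := \int_\SS C_{\pi_{\nu^*}}[\phi(s,\cdot;\bar\omega),\,h](s)\,\rho_{\pi_{\nu^*}}(\d s) = 0$ for all $\bar\omega\in\Theta$, with $h(s,a) := Q_{\pi_{\nu^*}}(s,a) - \tau f_{\nu^*}(s,a)$; here I use that the covariance $C_{\pi_{\nu^*}}$ annihilates the $\tau\log Z(s)$ piece of $\log\pi_{\nu^*}$, which depends on $s$ only. Writing $\mu(\d s,\d a) = \pi_{\nu^*}(s,\d a)\,\rho_{\pi_{\nu^*}}(\d s)$ and centering in $a$, this is $g(\bar\omega) = \int_{\SS\times\AA}\phi(s,a;\bar\omega)\,\tilde h(s,a)\,\mu(\d s,\d a)$ with $\tilde h = h - \Ex{\pi_{\nu^*}}{h}$. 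The universal approximation \aref{a:mf} b) makes $\{\phi(\cdot;\bar\omega)\}$ dense in $L^2(\SS\times\AA)$; since $\pi_{\nu^*} > 0$ by the softmax form \eref{e:softmax} and $\rho_{\pi_{\nu^*}}$ is bounded below by $\rho_0$ (the $t=0$ term in \eref{e:rhopi}), the measure $\mu$ is comparable to Lebesgue and the density transfers to $L^2(\mu)$, forcing $\tilde h = 0$ $\mu$-a.e. Hence $Q_{\pi_{\nu^*}}(s,a) - \tau f_{\nu^*}(s,a) = c(s)$, i.e. $\pi_{\nu^*}(s,a) \propto \exp(\tau^{-1}Q_{\pi_{\nu^*}}(s,a))$. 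Substituting this self-consistency into $V_{\pi_{\nu^*}}(s) = \Ex{\pi_{\nu^*}}{Q_{\pi_{\nu^*}}(s_0,a_0)\mid s_0 = s} - \tau D_{\mathrm{KL}}(\pi_{\nu^*}(s,\cdot);\bar\pi)$ collapses it, using \eref{e:qpi}, to the soft Bellman optimality equation
\[
  V_{\pi_{\nu^*}}(s) = \tau\log\int_\AA \exp\pc{\tau^{-1}Q_{\pi_{\nu^*}}(s,a)}\,\d a,
\]
whose solution is unique because the soft Bellman operator is a $\gamma$-contraction in the sup norm; therefore $\pi_{\nu^*} = \pi^*$.

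\textbf{Dynamics.} The only missing ingredient is full support of the limit. By \lref{l:support} the flow preserves the support condition of \aref{a:mf} c) at every finite time, but mass may degenerate (concentrate, or escape to $|\omega_0| = \infty$) as $t\to\infty$, so the statics hypothesis need not survive the limit. This gap is closed by \lref{l:spurious}, by contradiction: if $\pi_{\nu^*}\neq\pi^*$ then $\tilde h\not\equiv 0$, so by \aref{a:mf} b) there is a direction $\bar\omega^*$ with $g(\bar\omega^*)\neq 0$. Since the $\omega_0$-component of $F$ is independent of $\omega_0$ while its angular component is proportional to $\omega_0$, particles near angle $\bar\omega^*$ have $|\omega_0|$ strictly driven away from rest; the separation condition \aref{a:mf} c), which is topologically preserved by the continuity equation, guarantees that mass always sits near $\bar\omega^*$ on the separating surface between $\{-\ro\}\times\Theta$ and $\{\ro\}\times\Theta$, and this mass keeps moving, contradicting convergence to $\nu^*$. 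Hence the only admissible limits are global optima, which with the Statics step yields $\pi_{\nu^*}=\pi^*$.

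\textbf{Main obstacle.} The genuinely hard step is the Dynamics part: transferring the finite-time expressivity (\aref{a:mf} b--c) to the $t=\infty$ limit despite possible degeneration of $\mathrm{supp}(\nu_t)$. This is exactly where the convexity exploited in \citet{ChizatBach18} is unavailable, and the homogeneity together with the topological separation \aref{a:mf} c) must substitute for it. Making the escaping-mass argument rigorous---controlling the coupled radial ($\omega_0$) and angular dynamics, and invoking the regular-value/density hypotheses of \aref{a:mf} a) to localize the obstruction to a genuine direction $\bar\omega^*$ rather than a measure-zero artifact---is the delicate core carried out in \lref{l:spurious}.
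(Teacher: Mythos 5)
Your proposal is correct and follows essentially the same route as the paper: the statics step reproduces \pref{p:mf} (stationarity plus homogeneity forces the $\omega_0$-component of the field to vanish on all of $\Theta$, universal approximation then gives $Q_{\pi_{\nu^*}}-\tau\log\pi_{\nu^*}-V_{\pi_{\nu^*}}=0$ a.e., and the soft Bellman contraction yields $\pi_{\nu^*}=\pi^*$), while the dynamics step invokes exactly the paper's \lref{l:support} and \lref{l:spurious}, with the correct mechanism identified (radial velocity $-g_{\nu_t}(\bar\omega)$ independent of $\omega_0$, angular velocity proportional to $\omega_0$, regular values of $g$, and topologically preserved separation). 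You rightly flag that the quantitative escaping-mass argument (the forward-invariant set $\mathcal G_+$, the two cases of \lref{l:spurious}, and the perturbative control of \lref{l:badcase}) is the technical core, and you defer it to those lemmas just as the paper's own proof of \tref{t:mf} does.
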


{Thus if the policy gradient dynamics \eref{e:mfpde} converges to a stationary point, 
{that point must be a global minimizer.} To prove this result, we first connect the optimality of a stationary point with the support of the underlying measure in parameter space.
\begin{proposition}\label{p:mf}
    Let $\nu^*$ be a stationary point of \eqref{e:mfpde} such that the full support property of \aref{a:mf} c) holds at $\nu^*$, then $\pi_{\nu^*} = \pi^*$.
\end{proposition}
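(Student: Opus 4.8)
The plan is to generalize the bandit "Statics" argument to the full \abbr{MDP} setting, where the key structural difference is that the vector field in \eref{e:mfpde} now integrates against the discounted occupancy measure $\rho_{\pi_{\nu^*}}(\d s)$ over states, and the quantity playing the role of the reward is $Q_{\pi_{\nu^*}}(s,a) - \tau \log \pi_{\nu^*}(s,a)$ rather than simply $r(a) - \tau \log \pi_\nu(a)$. First I would use stationarity together with the full support hypothesis of \aref{a:mf} c) at $\nu^*$ to deduce that the drift $F(\omega;\nu^*)$ vanishes $\nu^*$-almost everywhere, and hence on a set dense enough in $\Omega$ to conclude it vanishes everywhere by continuity (the continuity being guaranteed by the regularity in \aref{a:mf} a)). Concretely, $\mathrm{div}(\nu^* F(\,\cdot\,;\nu^*)) = 0$ with full support forces
\begin{equ}
  \int_{\SS\times\AA} \nabla_\omega \psi(s,a;\omega)\,\bigl(Q_{\pi_{\nu^*}}(s,a) - \tau \log \pi_{\nu^*}(s,a) - \Ex{\pi_{\nu^*}}{Q_{\pi_{\nu^*}} - \tau \log \pi_{\nu^*}}(s)\bigr)\,\pi_{\nu^*}(s,\d a)\,\rho_{\pi_{\nu^*}}(\d s) = 0
\end{equ}
for all $\omega$, where the centering by the conditional expectation comes from the covariance structure $C_{\pi_{\nu^*}}$.

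Next I would extract the first ($\omega_0$) component using the $1$-homogeneity assumption $\psi(s,a;\omega)=\omega_0\,\phi(s,a;\bar\omega)$, so that $\nabla_{\omega_0}\psi = \phi(s,a;\bar\omega)$, yielding for every $\bar\omega\in\Theta$
\begin{equ}
  \int_{\SS\times\AA} \phi(s,a;\bar\omega)\,\bigl(Q_{\pi_{\nu^*}}(s,a) - \tau \log \pi_{\nu^*}(s,a) - c_{\nu^*}(s)\bigr)\,\pi_{\nu^*}(s,\d a)\,\rho_{\pi_{\nu^*}}(\d s) = 0,
\end{equ}
with $c_{\nu^*}(s)$ the state-dependent centering. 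Then I would invoke the universal approximation property \aref{a:mf} b) — density of the span of $\{\phi(\cdot;\bar\omega)\}$ in $L^2(\SS\times\AA)$ — to conclude that the bracketed function vanishes in the appropriate $L^2$ sense against the measure $\pi_{\nu^*}(s,\d a)\rho_{\pi_{\nu^*}}(\d s)$; since $\rho_{\pi_{\nu^*}}$ has a density (inherited from the absolute continuity of $\rho_0$ and the regularized policy) and $\pi_{\nu^*}(s,a)>0$ everywhere by the softmax form, this gives the pointwise identity $Q_{\pi_{\nu^*}}(s,a) - \tau \log \pi_{\nu^*}(s,a) = c_{\nu^*}(s)$ for (almost) all $(s,a)$.

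Finally I would translate this fixed-point relation into the optimality Bellman condition. Using the softmax parametrization, $\tau\log\pi_{\nu^*}(s,a) = f_{\nu^*}(s,a) - \tau\log\int_\AA \exp(f_{\nu^*}(s,a'))\,\d a'$, the identity says that $Q_{\pi_{\nu^*}}(s,\cdot)$ and $\tau\log\pi_{\nu^*}(s,\cdot)$ differ by a function of $s$ alone; this is exactly the self-consistency equation characterizing the optimal entropy-regularized policy $\pi_{\nu^*}(s,a)\propto \exp(\tau^{-1}Q_{\pi_{\nu^*}}(s,a))$, which is the soft-greedy Bellman optimality fixed point. I would then argue that this soft Bellman consistency, combined with uniqueness of the optimal regularized value function (the soft Bellman operator being a $\gamma$-contraction), forces $V_{\pi_{\nu^*}} = V_{\pi^*}$ and hence $\pi_{\nu^*}=\pi^*$.

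The main obstacle I expect is not the algebraic extraction but the passage from "$F(\omega;\nu^*)=0$ on the support of $\nu^*$" to "the bracketed integrand vanishes," because here the $Q$-function itself depends on $\nu^*$ through the whole trajectory, so unlike the bandit case the relation $Q_{\pi_{\nu^*}}(s,a)-\tau\log\pi_{\nu^*}(s,a)=c_{\nu^*}(s)$ is an \emph{implicit} equation coupling the policy to its own occupancy and value. The delicate point is justifying that this implicit fixed-point relation genuinely coincides with the soft Bellman optimality condition (rather than merely a stationarity condition that could admit suboptimal solutions): one must verify that $c_{\nu^*}(s)$ equals $V_{\pi_{\nu^*}}(s)$ up to the entropy term, using the identity $V_{\pi_\nu}(s) = \Ex{\pi_\nu}{Q_{\pi_\nu}(s,a)} - \tau D_{\mathrm{KL}}$ recorded after \eref{e:qpi}, and that the resulting self-consistent policy is the unique optimizer. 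Care is also needed to ensure $\rho_{\pi_{\nu^*}}$ charges all of $\SS$ (so that the $L^2$ conclusion is genuinely pointwise in $s$), which should follow from the assumed absolute continuity of $\rho_0$ and the exploratory nature of the entropy-regularized softmax policy.
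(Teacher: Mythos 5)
Your proposal follows the same architecture as the paper's proof: stationarity forces the drift to vanish on the support, homogeneity isolates the $\omega_0$-component, universal approximation (\aref{a:mf} b)) together with positivity of the softmax policy upgrades the resulting orthogonality to the pointwise identity $Q_{\pi_{\nu^*}} - \tau\log\pi_{\nu^*} = V_{\pi_{\nu^*}}$, and this identity is recognized as the fixed-point equation $T^\tau Q_{\pi_{\nu^*}} = Q_{\pi_{\nu^*}}$ for the soft Bellman backup \eref{e:boltzmannupdate}, whose $\gamma$-contractivity gives uniqueness and hence $\pi_{\nu^*}=\pi^*$. The ``delicate point'' you flag — that the centering $c_{\nu^*}(s)$ equals $V_{\pi_{\nu^*}}(s)$ and that the self-consistency relation is genuinely the optimality condition rather than a weaker stationarity condition — is resolved exactly as you suggest, and this is how the paper argues.

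One step, however, is justified incorrectly. Assumption \aref{a:mf} c) is a \emph{separation} property — $\supp(\nu^*)$ lies in $[-\ro,\ro]\times\Theta$ and meets every continuous path joining $\{-\ro\}\times\Theta$ to $\{\ro\}\times\Theta$ — not density of the support in $\Omega$. A measure concentrated on the graph of a continuous map $\omega_0 = h(\bar\omega)$ satisfies c), yet its support is nowhere dense, so your inference ``the drift vanishes $\nu^*$-a.e., hence on a dense set, hence everywhere by continuity'' does not go through. The fix uses the two ingredients you already have, in the opposite order: by homogeneity, the $\omega_0$-component of the drift, namely $\int_{\SS\times\AA}\phi(s,a;\bar\omega)\pc{Q_{\pi_{\nu^*}}-\tau\log\pi_{\nu^*}-V_{\pi_{\nu^*}}}\pi_{\nu^*}(s,\d a)\,\rho_{\pi_{\nu^*}}(\d s)$, is a function of $\bar\omega$ alone; it vanishes on the projection of $\supp(\nu^*)$ onto $\Theta$; and the separation property forces this projection to be all of $\Theta$, since each fiber $[-\ro,\ro]\times\{\bar\omega\}$ is itself a path joining the two faces and therefore meets the support. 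This is precisely how the paper passes from \eref{e:support2} to \eref{e:gradient0} holding for every $\bar\omega'\in\Theta$; with this repair the remainder of your argument (including your remark that $\rho_{\pi_{\nu^*}}$ charges $\SS$, which follows since $\rho_{\pi}\geq\rho_0$ by the $t=0$ term of \eref{e:rhopi}) goes through unchanged.
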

\noindent We prove this intermediate result in two steps: first we show that, by homogeneity of $\psi$ and \aref{a:mf} c), the velocity field of \eref{e:mfpde} must vanish a.e. in $\Omega = \Rr \times \Theta$. Then, by \aref{a:mf} b) we deduce that a stationary point $\nu^*$ cannot result from the orthogonality between $D f_\nu$ and the error estimate $\delta(\nu) = (Q_{\pi_\nu}(s,a) - \tau \log \pi_\nu(s,a) - V_{\pi_\nu}(s))$, implying $\delta(\nu) = 0$.

Following \pref{p:mf}, one important technical step in the proof of \tref{t:mf} is to show that the mean-field policy gradient dynamics \eref{e:mfpde} preserves \aref{a:mf} c) on the support of the measure throughout training, as shown in \lref{l:support}. This result holds for any finite time by topological arguments, as the separation property of the measure cannot be altered by the action of a continuous flow such as \eref{e:mfpde}. }
      To extend to infinite time (and thus to the stationary point) leveraging the above partial result, we finally prove in \lref{l:spurious}, that spurious fixed points are avoided by the policy gradient  dynamics \eref{e:mfpde} when initialized properly. To establish this we argue by contradiction: assuming that we are approaching such a spurious fixed point $\tilde \nu$ at time $t_0$, we show in \lref{l:boundong} that the velocity field will change little for any $t>t_0$. In particular, it follows that in this regime the dynamics of \eref{e:mfpde} can be approximated by the gradient descent dynamics (in particle space) of an approximately \emph{fixed} potential. 
      On the other hand, by \aref{a:mf} c) and by the homogeneity of $\psi$, we are able to show that a positive amount of measure $\tilde \nu$ will fall in a forward invariant region where its $\omega_0$ component will grow linearly in $t$, thereby eventually contradicting the assumption that $\tilde \nu$ is a fixed point of \eref{e:mfpde}.

\section{Numerical examples}\label{s:nn}

{{To test our theoretical results in a simple setting} we train a wide, single hidden layer neural network with policy gradients to learn the optimal softmax policy \eref{e:softmax}  for entropy-regularized rewards with parametrization \eref{e:mfpolicy} and regularization parameter $\tau = 0.2$. We do so in two separate settings:
\begin{enumerate}[(a)]
  \item $\mathcal S = \{0\}$, $\mathcal A = [0,1]$. This setting corresponds to bandits framework discussed in \sref{s:bandits}.
  \item $\mathcal S \times \mathcal A$ is a grid of size $100 \times 100$ in the set $[0,1]^2$. In this case, we have chosen a discount factor of $\gamma = 0.7$ and a transition process given by $P(s,a,s') = 0.9 \delta(s'-a) + 0.1/100$ (\ie an action $a$ leads to the corresponding state $s' = a$ with probability $0.9$ and is uniformly distributed with probability $0.1$). At each iteration we have computed the \emph{exact} distribution $\rho_\pi$ by computing the resolvent of the (weighted) transition matrix.
  \end{enumerate}
  In both cases, we defined the optimal Q function as $Q^*(s,a) = \tau f_w^*(s,a)$, where $f_w^*(s,a)$ is given by a single hidden layer neural network of width $n = 5$, ReLU nonlinearites and weights $w$ drawn independently and identically distributed from a centered, normal distribution with variance $\sigma^2 = 4$, \ie
\begin{equ}
  Q^*(s,a) = \tau f_w^*(s,a) \qquad \text{for } w_i \sim \mathcal N(0,4)\,.
\end{equ}
We learn the optimal policy for the problem defined above using a $N = 800$-neurons wide single hidden layer neural network with ReLU nonlinearities in the mean-field regime \eref{e:mfpolicy} used as energy for a softmax policy \eref{e:softmax}. The initialization of the student network is as follows: first-layer weights are initialized at random drawn independently from a centered, normal distribution with variance $\sigma^2 = 4$, while output weights are initialized at $0$. The model is trained according to \eref{e:pgu1} with fixed step-size $\beta_t = 10^{-3}$. We report the results of this training procedure in \fref{f:figure}, where we notice that all the paths monotonically decrease the error $\mathcal E(\nu^*) - \mathcal E(\nu_t)$, as predicted by our results. Note that the convergence rate of the model varies across experiments, consistently with the purely qualitative nature of the convergence result we proved. }
\begin{figure}
  \begin{subfigure}{.5\textwidth}
  \centering
  \includegraphics[width = 0.95\linewidth]{./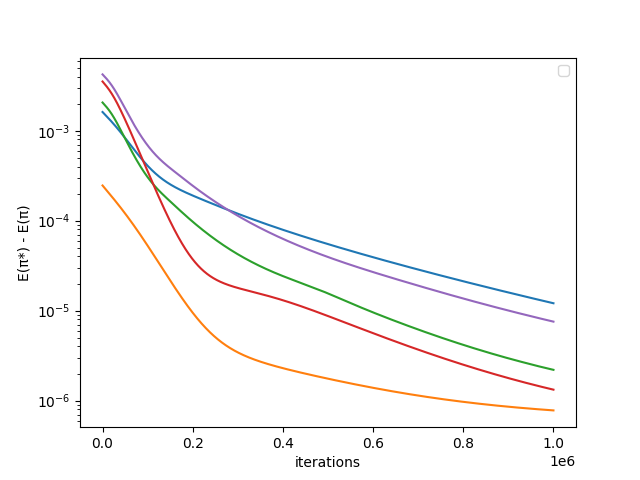}
  \caption{}
  \label{fig:sub1}
\end{subfigure}%
\begin{subfigure}{.5\textwidth}
  \centering
  \includegraphics[width = 0.95\textwidth]{./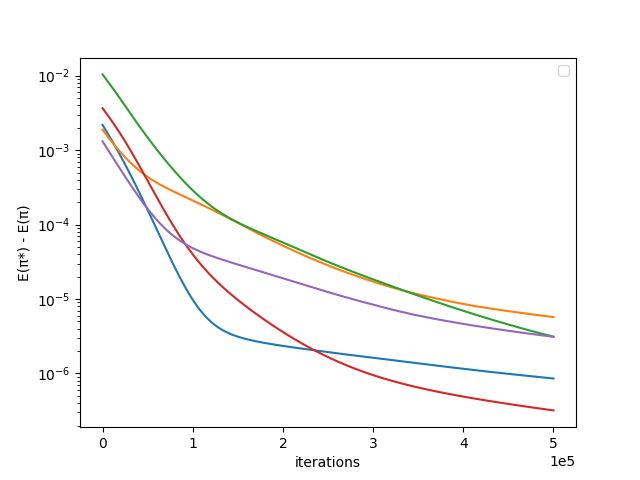}
  \label{fig:sub2}
  \caption{}
\end{subfigure}
  \caption{Evolution of $\mathcal E(\nu^*) - \mathcal E(\nu_t)$ as a function of training time, for experiments (a) and (b) as described in the main text. Different lines correspond to different random initializations of reward function and learning model.  We see that the training error decreases monotonically (but with different rates) during trainig.}
  \label{f:figure}
\end{figure}
\section{Conclusions and future work}

This work addresses the problem of optimality of policy gradient algorithms, a workhorse of deep reinforcement learning, when combined with mean-field models such as neural networks. More specifically, we provide a mean-field formulation of the parametric dynamics of policy gradient algorithms for entropy-regularized \abbr{MDP}s and prove that, under mild assumptions, all fixed points of such dynamics are optimal. This extends similar results obtained in the ``neural'' or ``lazy'' regime to the mean-field one, which is known to be much more expressive \citep{e19, ghorbani2020neural}, but also highly nonlinear. The latter feature prevents, at present, from obtaining convergence results of these models, except in very specific settings \citep{chizat19, mondelli19}.

Interesting avenues or future research include the relaxing the adiabaticity assumption, \ie considering the stochastic approximation problem resulting from the finite number of samples and the finite gradient step-size, as well as establishing quantitative bounds for models with a large, but finite, number of parameters.
Probably the most important open question, however, concerns establishing quantitative convergence of mean-field dynamics of neural networks: even in the supervised setting, despite recent results in specific settings \citep{chizat19, mondelli19}, these guarantees remain mainly out of reach.


\bibliographystyle{JPE}
\bibliography{bib.bib}

\begin{thebibliography}{10}

\bibitem{agarwal20}
A.~Agarwal, M.~Henaff, S.~Kakade, and W.~Sun.
\newblock Pc-pg: Policy cover directed exploration for provable policy gradient
  learning.
\newblock {\em arXiv preprint arXiv:2007.08459\/} .

\bibitem{agarwal19}
A.~Agarwal, S.~M. Kakade, J.~D. Lee, and G.~Mahajan.
\newblock Optimality and approximation with policy gradient methods in markov
  decision processes.
\newblock {\em arXiv preprint arXiv:1908.00261\/} .

\bibitem{agazzi19}
A.~Agazzi and J.~Lu.
\newblock Temporal-difference learning for nonlinear value function
  approximation in the lazy training regime.
\newblock {\em arXiv preprint arXiv:1905.10917\/} .

\bibitem{AZLiLiang18}
Z.~Allen-Zhu, Y.~Li, and Y.~Liang.
\newblock Learning and generalization in overparameterized neural networks,
  going beyond two layers.
\newblock In: {\em Advances in neural information processing systems\/} (2019).

\bibitem{AZLiSong18}
Z.~Allen-Zhu, Y.~Li, and Z.~Song.
\newblock A convergence theory for deep learning via over-parameterization.
\newblock {\em arXiv preprint arXiv:1811.03962\/} .

\bibitem{ambrosio08}
L.~Ambrosio, N.~Gigli, and G.~Savar{\'e}.
\newblock {\em Gradient flows: in metric spaces and in the space of probability
  measures\/} (Springer Science \& Business Media, 2008).

\bibitem{Barron:93}
A.~R. {Barron}.
\newblock Universal approximation bounds for superpositions of a sigmoidal
  function.
\newblock {\em IEEE Transactions on Information Theory\/} {\bf 39} (1993),
  930--945.

\bibitem{bhandari19}
J.~Bhandari and D.~Russo.
\newblock Global optimality guarantees for policy gradient methods.
\newblock {\em arXiv preprint arXiv:1906.01786\/} .

\bibitem{Borkar:09}
V.~S. Borkar.
\newblock {\em Stochastic approximation: a dynamical systems viewpoint\/},
  volume~48 (Springer, 2009).

\bibitem{wang19td}
Q.~Cai, Z.~Yang, J.~D. Lee, and Z.~Wang.
\newblock Neural temporal-difference learning converges to global optima.
\newblock In: {\em Advances in Neural Information Processing Systems\/} (2019).

\bibitem{cen20}
S.~Cen, C.~Cheng, Y.~Chen, Y.~Wei, and Y.~Chi.
\newblock Fast global convergence of natural policy gradient methods with
  entropy regularization.
\newblock {\em arXiv preprint arXiv:2007.06558\/} .

\bibitem{chizat19}
L.~Chizat.
\newblock Sparse optimization on measures with over-parameterized gradient
  descent.
\newblock {\em arXiv preprint arXiv:1907.10300\/} .

\bibitem{ChizatBach18}
L.~Chizat and F.~Bach.
\newblock On the global convergence of gradient descent for over-parameterized
  models using optimal transport.
\newblock In: {\em Proceedings of the 32Nd International Conference on Neural
  Information Processing Systems\/}, NIPS'18 (2018).

\bibitem{ChizatBach20}
L.~Chizat and F.~Bach.
\newblock Implicit bias of gradient descent for wide two-layer neural networks
  trained with the logistic loss.
\newblock {\em arXiv preprint arXiv:2002.04486\/} .

\bibitem{cb182}
L.~Chizat, E.~Oyallon, and F.~Bach.
\newblock On lazy training in differentiable programming.
\newblock In: {\em Advances in Neural Information Processing Systems\/} (2019).

\bibitem{Cybenko:89}
G.~Cybenko.
\newblock Approximation by superpositions of a sigmoidal function.
\newblock {\em Mathematics of Control, Signals and Systems\/} {\bf 2} (1989),
  303--314.

\bibitem{DuLeeLiWangZhai:18}
S.~Du, J.~Lee, H.~Li, L.~Wang, and X.~Zhai.
\newblock Gradient descent finds global minima of deep neural networks.
\newblock In: {\em International Conference on Machine Learning\/} (2019).

\bibitem{DuZhaiPocSin18}
S.~S. Du, X.~Zhai, B.~Poczos, and A.~Singh.
\newblock Gradient descent provably optimizes over-parameterized neural
  networks.
\newblock Preprint, arXiv:1810.02054 (2018).

\bibitem{e19}
W.~E, C.~Ma, and L.~Wu.
\newblock Barron spaces and the compositional function spaces for neural
  network models.
\newblock {\em arXiv preprint arXiv:1906.08039\/} .

\bibitem{Montanari:19b}
B.~Ghorbani, S.~Mei, T.~Misiakiewicz, and A.~Montanari.
\newblock Linearized two-layers neural networks in high dimension.
\newblock ArXiv preprint arXiv:1904.12191 (2019).

\bibitem{ghorbani2020neural}
B.~Ghorbani, S.~Mei, T.~Misiakiewicz, and A.~Montanari.
\newblock When do neural networks outperform kernel methods?
\newblock {\em arXiv preprint arXiv:2006.13409\/} .

\bibitem{Haarnoja:18}
T.~Haarnoja, S.~Ha, A.~Zhou, J.~Tan, G.~Tucker, and S.~Levine.
\newblock Learning to walk via deep reinforcement learning.
\newblock Preprint, arXiv:1812.11103 (2018).

\bibitem{haarnoja18walk}
T.~Haarnoja, S.~Ha, A.~Zhou, J.~Tan, G.~Tucker, and S.~Levine.
\newblock Learning to walk via deep reinforcement learning.
\newblock {\em arXiv preprint arXiv:1812.11103\/} .

\bibitem{haarnoja18}
T.~Haarnoja, A.~Zhou, P.~Abbeel, and S.~Levine.
\newblock Soft actor-critic: Off-policy maximum entropy deep reinforcement
  learning with a stochastic actor.
\newblock In: {\em International Conference on Machine Learning\/} (2018).

\bibitem{Hornik:91}
K.~Hornik.
\newblock Approximation capabilities of multilayer feedforward networks.
\newblock {\em Neural Networks\/} {\bf 4} (1991), 251 -- 257.

\bibitem{hongler18}
A.~Jacot, F.~Gabriel, and C.~Hongler.
\newblock Neural tangent kernel: Convergence and generalization in neural
  networks.
\newblock In: {\em Advances in neural information processing systems\/} (2018).

\bibitem{mondelli19}
A.~Javanmard, M.~Mondelli, and A.~Montanari.
\newblock Analysis of a two-layer neural network via displacement convexity.
\newblock {\em arXiv preprint arXiv:1901.01375\/} .

\bibitem{kakade02}
S.~Kakade and J.~Langford.
\newblock Approximately optimal approximate reinforcement learning.
\newblock In: {\em Proceedings of the Nineteenth International Conference on
  Machine Learning\/} (Morgan Kaufmann Publishers Inc., 2002).

\bibitem{Lee19}
J.~Lee, L.~Xiao, S.~Schoenholz, Y.~Bahri, R.~Novak, J.~Sohl-Dickstein, and
  J.~Pennington.
\newblock Wide neural networks of any depth evolve as linear models under
  gradient descent.
\newblock In: {\em Advances in neural information processing systems\/} (2019).

\bibitem{mei20}
J.~Mei, C.~Xiao, C.~Szepesvari, and D.~Schuurmans.
\newblock On the global convergence rates of softmax policy gradient methods.
\newblock {\em arXiv preprint arXiv:2005.06392\/} .

\bibitem{MeiMonNgu18}
S.~Mei, A.~Montanari, and P.-M. Nguyen.
\newblock A mean field view of the landscape of two-layer neural networks.
\newblock {\em Proceedings of the National Academy of Sciences\/} {\bf 115}
  (2018), E7665--E7671.

\bibitem{mnih:13}
V.~Mnih, K.~Kavukcuoglu, D.~Silver, A.~Graves, I.~Antonoglou, D.~Wierstra, and
  M.~Riedmiller.
\newblock Playing {A}tari with deep reinforcement learning.
\newblock In: {\em NIPS Deep Learning Workshop\/} (2013).

\bibitem{mnih:15}
V.~Mnih, K.~Kavukcuoglu, D.~Silver, A.~A. Rusu, J.~Veness, M.~G. Bellemare,
  A.~Graves, M.~Riedmiller, A.~K. Fidjeland, G.~Ostrovski, S.~Petersen,
  C.~Beattie, A.~Sadik, I.~Antonoglou, H.~King, D.~Kumaran, D.~Wierstra,
  S.~Legg, and D.~Hassabis.
\newblock Human-level control through deep reinforcement learning.
\newblock {\em Nature\/}  (2015), 529--533.

\bibitem{nguyen20}
P.-M. Nguyen and H.~T. Pham.
\newblock A rigorous framework for the mean field limit of multilayer neural
  networks.
\newblock {\em arXiv preprint arXiv:2001.11443\/} .

\bibitem{OymSol19}
S.~Oymak and M.~Soltanolkotabi.
\newblock Towards moderate overparameterization: global convergence guarantees
  for training shallow neural networks.
\newblock {\em IEEE Journal on Selected Areas in Information Theory\/} .

\bibitem{RotJelBruVE19}
G.~Rotskoff, S.~Jelassi, J.~Bruna, and E.~Vanden-Eijnden.
\newblock Neuron birth-death dynamics accelerates gradient descent and
  converges asymptotically.
\newblock In: {\em International Conference on Machine Learning\/} (2019).

\bibitem{RotVE18}
G.~Rotskoff and E.~Vanden-Eijnden.
\newblock Parameters as interacting particles: long time convergence and
  asymptotic error scaling of neural networks.
\newblock In: S.~Bengio, H.~Wallach, H.~Larochelle, K.~Grauman,
  N.~Cesa-Bianchi, and R.~Garnett, eds., {\em Advances in Neural Information
  Processing Systems 31\/} (Curran Associates, Inc., 2018), pp. 7146--7155.

\bibitem{santambrogio2017euclidean}
F.~Santambrogio.
\newblock $\{$Euclidean, metric, and Wasserstein$\}$ gradient flows: an
  overview.
\newblock {\em Bulletin of Mathematical Sciences\/} {\bf 7} (2017), 87--154.

\bibitem{Silver:16}
D.~Silver, A.~Huang, C.~J. Maddison, A.~Guez, L.~Sifre, G.~van~den Driessche,
  J.~Schrittwieser, I.~Antonoglou, V.~Panneershelvam, M.~Lanctot, S.~Dieleman,
  D.~Grewe, J.~Nham, N.~Kalchbrenner, I.~Sutskever, T.~Lillicrap, M.~Leach,
  K.~Kavukcuoglu, T.~Graepel, and D.~Hassabis.
\newblock Mastering the game of {G}o with deep neural networks and tree search
  .

\bibitem{Silver:18}
D.~Silver, T.~Hubert, J.~Schrittwieser, I.~Antonoglou, M.~Lai, A.~Guez,
  M.~Lanctot, L.~Sifre, D.~Kumaran, T.~Graepel, T.~Lillicrap, K.~Simonyan, and
  D.~Hassabis.
\newblock A general reinforcement learning algorithm that masters chess, shogi,
  and {G}o through self-play.
\newblock {\em Science\/} {\bf 362} (2018), 1140--1144.

\bibitem{Silver:17}
D.~Silver, J.~Schrittwieser, K.~Simonyan, I.~Antonoglou, A.~Huang, A.~Guez,
  T.~Hubert, L.~Baker, M.~Lai, A.~Bolton, Y.~Chen, T.~Lillicrap, F.~Hui,
  L.~Sifre, G.~van~den Driessche, T.~Graepel, and D.~Hassabis.
\newblock Mastering the game of {G}o without human knowledge.
\newblock {\em Nature\/} {\bf 550} (2017), 354.

\bibitem{SuttonBarto:18}
R.~S. Sutton and A.~G. Barto.
\newblock {\em Reinforcement Learning: An Introduction\/} (Cambridge, MA: MIT
  Press, 2018), second edition.

\bibitem{sutton00}
R.~S. Sutton, D.~A. McAllester, S.~P. Singh, and Y.~Mansour.
\newblock Policy gradient methods for reinforcement learning with function
  approximation.
\newblock In: {\em Advances in neural information processing systems\/} (2000).

\bibitem{sznitman91}
A.-S. Sznitman.
\newblock Topics in propagation of chaos.
\newblock In: {\em Ecole d'{\'e}t{\'e} de probabilit{\'e}s de Saint-Flour
  XIX—1989\/} (Springer, 1991), pp. 165--251.

\bibitem{vinyals19}
O.~Vinyals, I.~Babuschkin, W.~M. Czarnecki, M.~Mathieu, A.~Dudzik, J.~Chung,
  D.~H. Choi, R.~Powell, T.~Ewalds, P.~Georgiev, {\em et~al.\/}.
\newblock Grandmaster level in starcraft ii using multi-agent reinforcement
  learning.
\newblock {\em Nature\/} {\bf 575} (2019), 350--354.

\bibitem{wang19lqr}
L.~Wang, Q.~Cai, Z.~Yang, and Z.~Wang.
\newblock Neural policy gradient methods: Global optimality and rates of
  convergence.
\newblock {\em arXiv preprint arXiv:1909.01150\/} .

\bibitem{WeiLeeLiuMa18}
C.~Wei, J.~D. Lee, Q.~Liu, and T.~Ma.
\newblock On the margin theory of feedforward neural networks.
\newblock Preprint, arXiv:1810.05369 (2018).

\bibitem{williams91}
R.~J. Williams and J.~Peng.
\newblock Function optimization using connectionist reinforcement learning
  algorithms.
\newblock {\em Connection Science\/} {\bf 3} (1991), 241--268.

\bibitem{wang19pg}
K.~Zhang, Z.~Yang, and T.~Ba{\c{s}}ar.
\newblock Multi-agent reinforcement learning: A selective overview of theories
  and algorithms.
\newblock {\em arXiv preprint arXiv:1911.10635\/} .

\bibitem{zhang20}
Y.~Zhang, Q.~Cai, Z.~Yang, Y.~Chen, and Z.~Wang.
\newblock Can temporal-difference and q-learning learn representation? a
  mean-field theory.
\newblock {\em arXiv preprint arXiv:2006.04761\/} .

\bibitem{ZouCaoZhouGu}
D.~Zou, Y.~Cao, D.~Zhou, and Q.~Gu.
\newblock Stochastic gradient descent optimizes over-parametererized deep
  {ReLU} networks.
\newblock Preprint, arXiv:1811.08888 (2018).

\end{thebibliography}
\newpage

\appendix

\numberwithin{equation}{section}
\renewcommand\thefigure{\thesection.\arabic{figure}}

\section{Derivation of softmax policy gradient dynamics}\label{appendix:derivation}

\begin{lemma}
The gradient of the entropy-regularized value function can be written as
\begin{equ}
\Ex{{s\sim \rho_0}}{\nabla_w {V_{\pi}(s)}} = \Ex{s\sim \rho^\pi, a\sim \pi_w}{\nabla_w \log \pi_w(s,a)\pc{Q_{\pi}(s,a)-\tau \log \pi_w(s,a)}},
\end{equ}
and thus the policy gradient dynamics \eref{e:pguc}.
\end{lemma}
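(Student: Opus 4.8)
The plan is to recover the identity from the Bellman recursion for the regularized value function, which is the entropy-regularized analogue of the classical policy gradient theorem. First I would rewrite $V_\pi$ in its one-step form using the identity stated right after \eref{e:qpi}: since $\bar\pi$ is Lebesgue measure, $D_{\mathrm{KL}}(\pi(s,\cdot);\bar\pi) = \int_\AA \pi_w(s,a)\log\pi_w(s,a)\,\d a$, so that
\begin{equ}
  V_\pi(s) = \int_\AA \pi_w(s,a)\pc{Q_\pi(s,a)-\tau\log\pi_w(s,a)}\,\d a\,.
\end{equ}
Differentiating in $w$ and using the log-derivative identity $\nabla_w\pi_w = \pi_w\nabla_w\log\pi_w$, the contribution from differentiating the factor $\pi_w$ reproduces exactly the integrand on the right-hand side of the claim, while the piece coming from differentiating the entropy, $-\tau\int_\AA\pi_w\nabla_w\log\pi_w\,\d a$, vanishes because $\int_\AA\pi_w\nabla_w\log\pi_w\,\d a = \nabla_w\int_\AA\pi_w\,\d a = 0$ by normalization.

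The only remaining term is $\int_\AA\pi_w\nabla_w Q_\pi\,\d a$. Because the reward is independent of $w$, the definition \eref{e:qpi} gives $\nabla_w Q_\pi(s,a) = \gamma\int_\SS P(s,a,\d s')\nabla_w V_\pi(s')$, and collecting everything yields the Bellman-type recursion
\begin{equ}
  \nabla_w V_\pi(s) = h(s) + \gamma\int_\SS P_\pi(s,\d s')\,\nabla_w V_\pi(s')\,,
\end{equ}
where $h(s) := \int_\AA \nabla_w\log\pi_w(s,a)\pc{Q_\pi(s,a)-\tau\log\pi_w(s,a)}\pi_w(s,\d a)$ is the single-step term. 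Iterating this recursion---equivalently inverting $I-\gamma P_\pi$ as a Neumann series, which converges since $\gamma<1$ and $P_\pi$ is a Markov (hence contractive) operator---gives $\nabla_w V_\pi(s) = \sum_{t\ge 0}\gamma^t\int_\SS P_\pi^t(s,\d s')\,h(s')$.

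To finish I would integrate against the fixed initial distribution $\rho_0$ and exchange the integral with the sum; by the definition \eref{e:rhopi} of the discounted occupancy measure, $\int_\SS\rho_0(\d s_0)\sum_{t\ge 0}\gamma^t P_\pi^t(s_0,\cdot) = \rho_\pi$, so that $\Ex{s\sim\rho_0}{\nabla_w V_\pi(s)} = \int_\SS h(s)\,\rho_\pi(\d s)$, which is precisely the claimed expectation over $s\sim\rho_\pi$, $a\sim\pi_w$. The main obstacle is analytic rather than algebraic: one must justify differentiating under the integral sign and interchanging $\nabla_w$ with the infinite sum. This requires a uniform-integrability / dominated-convergence argument supplied by the boundedness of $r$, the regularity of $\psi$ (and hence of $\pi_w$ and $\nabla_w\log\pi_w$) from \aref{a:mf} a), and the geometric weight $\gamma^t$, which guarantees absolute convergence of the Neumann series and controls all the tails.
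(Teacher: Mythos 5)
Your proposal is correct and follows essentially the same route as the paper's proof: a one-step Bellman decomposition of $V_\pi$, the product rule with the entropy-gradient term killed by the normalization identity $\int_\AA \nabla_w \pi_w(s,a)\,\d a = 0$, the resulting recursion in $\nabla_w V_\pi$, and its unrolling into the discounted occupancy measure. Your framing of the unrolling as a Neumann-series inversion of $I-\gamma P_\pi$ and your explicit remark on justifying the interchange of $\nabla_w$ with the infinite sum are cosmetic refinements of, not departures from, the paper's argument.
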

\begin{proof}
  We choose throughout $\bar \pi$ as the Lebesgue measure, and use that $\pi_w$ is absolutely continuous wrt $\bar \pi$. Taking the gradient of \eref{e:v} using a parametric policy $\pi_w$ we obtain
  \begin{equs}
    \nabla_w V_{\pi_w}(s) &= \nabla_w\Ex{\pi_w}{\sum_{t = 0}^\infty \gamma^t \pc{r(s_t,a_t,s_{t+1})- \tau D_{\mathrm{KL}} (\pi_w(s_t, \cdot); \bar \pi(\,\cdot\,))}\Big|s_0 = s}\\
    & = \nabla_w\int_{\SS \times \AA}\pc{ {r(s,a_1,s_{1})- \tau \log \frac{\pi_w(s, a_1)}{ \bar \pi(a_1)}} + \gamma V_{\pi_w}(s_1)} P(s,a_1,\d s_1)\pi_w(s,\d a_1) \\
    & = \int_{\SS \times \AA}\pc{ {r(s,a_1,s_{1})- \tau \log \frac{\pi_w(s, a_1)}{ \bar \pi(a_1)}} + \gamma V_{\pi_w}(s_1)} P(s,a_1,\d s_1) \nabla_w \pi_w(s,\d a_1)
     \\& \qquad \qquad \qquad + \int_{\SS \times \AA}\pc{ -\tau \nabla_w \log \frac{\pi_w(s, a_1)}{ \bar \pi(a_1)} + \gamma \nabla_w V_{\pi_w}(s_1)} P(s,a_1,\d s_1) \pi_w(s,\d a_1)\label{e:pgproof}
  \end{equs}
  Now, since $\int_\SS P(s,a,\d s') = 1$ and $\int_\AA \pi_w(s,\d a') = 1$ for all $s,a, w$ we have for the first term in brackets in the last line
  \begin{equ}
    \int_{\SS \times \AA} \nabla_w \log \frac{\pi_w(s, a_1)}{ \bar \pi(a_1)} P(s,a_1,\d s_1) \pi_w(s,\d a_1) = \int_\AA \nabla_w \pi_w(s, a_1) \d a_1 =  \nabla_w \int_\AA  \pi_w(s, a_1) \d a_1 = 0
  \end{equ}
  On the other hand, we can rewrite the second term in brackets as $\Ex{\pi_w}{\gamma \nabla V_{\pi_w}(s_1)|s_0=s}$, and recognize the \abbr{lhs} of \eref{e:pgproof} evaluated at the next state $s_1$ in the expectation. Therefore, we can sequentially repeat the same computation as above, and recalling the definition of $\rho^{\pi_w}(\,\cdot\,)$ and $Q_\pi(s,a)$ in \eref{e:rhopi} and \eref{e:qpi} we obtain
  \begin{equs}
    \nabla_w V_{\pi_w}(s) & = \int_\AA \sum_{t=0}^\infty \gamma^t \pc{r(s_t,a_t,s_{t+1}) - \tau \log \frac{\pi_w(s_t, a_t)}{ \bar \pi(a_t)}+ \gamma V_{\pi_w}(s_{t+1}) }\nabla_w \pi_w(s_t,\d a_t) \Big|_{s_0 = s} \\
    & = \Ex{\pi_w}{\sum_{t=0}^\infty \gamma^t \pc{Q_{\pi_w}(s_t,a_t) - \tau \log \frac{\pi_w(s_t, a_t)}{ \bar \pi(a_t)} }\nabla_w \log \pi_w(s_t,a_t) \Big|s_0 = s }\\
    & = \int_{\SS \times \AA} \pc{Q_{\pi_w}(s_t,a_t) - \tau \log \frac{\pi_w(s, a)}{ \bar \pi(a)} }\nabla_w \log \pi_w(s,a) \,  \rho^{\pi_w}(\d s)\pi_w(s,\d a),
  \end{equs}
  where in the second line we have used that, if $\pi_w > 0$ on $\AA$, $\pi_w(s_t, \d a_t) \nabla_w \log \pi_w(s_t, a_t) = \nabla_w \pi_w(s_t,\d a_t)$.  
\end{proof}

\begin{repproposition}{l:wgf}
  For a fixed initial distribution $\rho_0$, the dynamics \eref{e:mfpde} is the Wasserstein gradient flow of the energy functional
  \begin{equ}
    \mathcal E[\nu] = \Ex{s_0 \sim \rho_0}{V_{\pi_{\nu}}(s_0)}.
  \end{equ}
\end{repproposition}
\begin{proof}
  We find the potential of the gradient flow by functional differentiation of $\mathcal E$:
  \begin{equs}\label{e:variationald1}
    \vd{\nu} \mathcal E[\nu](\omega) & = \int_\AA \vdd{\mathcal E}{\pi}(s, a) \vdd{\pi_\nu}{\nu}(s,a;\omega) \d s\, \d a
  \end{equs}
  and consider the two terms in the integral separately, starting from the second:
  \begin{equs}
    \vdd{\pi_\nu}{\nu}(s,a;\omega) & = \vd{\nu} \frac{e^{\tau \int \psi(s,a;\omega) \nu(\d \omega)}}{\int_\AA e^{\tau \int \psi(s,a;\omega) \nu(\d \omega)}\d a}
    \\&= \frac1{\int_\AA e^{\tau \int \psi(s,a;\omega) \nu(\d \omega)}\d a} \vd{\nu} {e^{\tau \int \psi(s,a;\omega) \nu(\d \omega)}} \\
    & \qquad - \frac{e^{\tau \int \psi(s,a;\omega) \nu(\d \omega)}}{\pc{\int_\AA e^{\tau \int \psi(s,a;\omega) \nu(\d \omega)}\d a}^2} \int_\AA \vd{\nu} {e^{\tau \int \psi(s,a;\omega) \nu(\d \omega)}}\d a \\
    & = \tau \pc{\psi(s,a;\omega)   - \int_\AA \psi(s,a';\omega) \pi_\nu(s,\d a')  }\pi_\nu(s,a)\label{e:variationald3}
  \end{equs}
  For the first term in the integrand of \eref{e:variationald1}, we use $\pi(s,a)$ as a density and obtain
  \begin{equs}
    \vdd{\mathcal E}{\pi}(s,a) &= \vd{\pi}\pq{\int_{\SS \times \AA} \pc{\bar r(s,a)-\tau \log\pi(s,a)} (\rho_\pi(\d s) \pi(s, \d a))}(s,a)
    \\ &= \int_{\SS}\biggl[\pc{\bar r(s,a)-\tau (\log\pi(s,a)+1)}\rho^\pi(s',s) \label{e:variationald2b} \\& \qquad \qquad  + \int_{\SS \times \AA} \pc{\bar r(s'',a'')-\tau \log\pi(s'',a'')}  \vdd{\rho^\pi(s',s'')}{\pi}(s,a)  \pi(s'', \d a'') \d s'' \biggr] \rho_0(\d s')\,,\label{e:variationald2}
  \end{equs}
  where in the last line we have used that $\vd{\pi} [\pi \log \pi](s, a) = \log \pi(s,a) + 1$.

  We evaluate the variational derivative of $\rho^\pi$ as
  \begin{equs}
     \vdd{\rho^\pi(s',s'')}{\pi}(s,a) &=  \sum_{t=0}^\infty \gamma^t \vd{\pi} \pq{\int P_\pi^t(s',s'')}(s,a) \\
     &= \gamma { \int_{\SS}  \vdd{P_\pi(s',s''')}{\pi}  \pc{\sum_{t=0}^\infty \gamma^t  P_\pi^t(s''',s'')}  +   P_\pi(s',s''') \vd{\pi} \sum_{t=0}^\infty \gamma^t \vd{\pi} P_\pi^t(s''',s'')} \d s'''\\
     & = \gamma \pq{ \int_{\SS} \delta({s,s'})P(s',a, s''') \rho^\pi(s''',s'') + P_\pi(s',s''') \vdd{\rho^\pi(s''', s'')}{\pi}(s,a) \d s'''}\\
  \end{equs}
  We further recognize the same derivative on the \abbr{rhs} of the above expression, allowing to write
  \begin{equ}
\vdd{\rho^\pi(s',s'')}{\pi}(s,a) = \sum_{t=0}^\infty \gamma^t P_\pi^t(s',s) \int_{\SS}{P(s,a,s''')\rho^\pi(s''',s'')}  \d s'''
  \end{equ}
 Finally, we notice that the last term in \eref{e:variationald2b} is constant in $a$ and therefore vanishes when integrated against \eref{e:variationald3}, so we only consider
  \begin{equs}
  \vdd{\mathcal E}{\pi}(s,a) - \rho_\pi(s) &= \int_{\SS^2 \times \AA} \pc{\bar r(s'',a'')-\tau \log\pi(s'',a'')}\times \\
  & \qquad \times \pc{ \delta_{s,s''} \delta_{a,a''} \rho^\pi(s',s) + \pi(s'',a'')  \vdd{\rho^\pi(s',s'')}{\pi}(s,a)} \rho_0(\d s')\d s'' \d a''\\
  &= \int_{\SS^2\times \AA}\sum_{t=0}^\infty \gamma^t P_\pi^t(s',s) \pc{\bar r(s'',a'')-\tau \log \pi(s'',a'')} \times \\
  & \qquad \times \pc{\delta_{s,s''} \delta_{a,a''} + \gamma  \int_\SS P(s,a,s''')\rho^\pi(s''',s'')\d s'''} \rho_0(\d s')\d s'' \d a''  \\
  & = \sum_{t=0}^\infty \gamma^t \int_\SS P_\pi^t(s',s) \pc{Q_\pi(s,a)- \tau \log \pi(s,a)}\rho_0(\d s')\\
  & = \pc{Q_\pi(s,a)- \tau \log \pi(s,a)} \rho_\pi(s) \label{e:variationald4}
  \end{equs}
  We conclude by noting that combining \eref{e:variationald3} and \eref{e:variationald4} we obtain $C_\pi[\psi(\omega),Q_\pi-\tau \log \pi](s)$, and the Wasserstein gradient flow corresponding to this potential is \eref{e:mfpde}.
\end{proof}

\section{Proofs of the many-particle limit}

\begin{reptheorem}{t:particles}
  Let \aref{a:mf} hold and let $w_t^{(N)}$ be a solution of \eref{e:pguc} with initial condition $w_0^{(N)} \in \mathcal W = \Omega^N$. If $\nu_0^{(N)}$ converges to $\nu_0 \in \mathcal P_2(\Omega)$ in Wasserstein distance $W_2$ then $\nu_t^{(N)}$ converges, for every $t>0$, to the unique solution $\nu_t$ of \eref{e:mfpde}.
\end{reptheorem}

\begin{proof}
As anticipated in the main text, the proof is divided in two parts:
\begin{enumerate}
  \item We prove sufficient regularity of the dynamics \eref{e:mfpde}, allowing to establish existence and uniqueness of its solution.
  \item We leverage the regularity proven above to establish a propagation of chaos result, showing that the system of interacting particles behaves asymptotically as its mean-field limit.
\end{enumerate}
While carrying out this proof is needed in our context since the dependence of $\mathcal E(\nu)$ on $\nu$ is more involved than in \eg \cite{MeiMonNgu18,RotVE18,ChizatBach18}, the steps of this derivation are mainly standard, see e.g., \citep{sznitman91}.

\subsection{Regularity}

We prove existence and uniqueness of the gradient flow dynamics \eref{e:mfpde} through standard arguments from the optimal transportation literature (see \eg \citep{ambrosio08}). More specifically, recalling that $\pi = \pi_\nu$ we leverage the Lipschitz continuity of the vector field
\begin{equ}
  F_t(\omega ,\nu) = C_{\pi} \pq{\nabla_\omega \psi(s,a; \omega), Q_\pi(s,a) - \tau \log \pi}
\end{equ}
with respect to $\nu$. To prove such regularity result, decompose
\begin{equ}\label{e:R}
  \EE(\nu) = R\pc{\int \psi \nu} \qquad \text{for }\quad  R(f) = S \circ \pi(f)\,.
\end{equ}
and $S~:~(\SS \to \mathcal M_+^1(\AA)) \to \Rr$ maps $\mu \mapsto \Ex{\mu}{V_\mu(s)|s \sim \rho_0}$ and $\pi~:~L^2(\SS \times \AA) \to (\SS \to \mathcal M_+^1(\AA))$ is the softmax policy parametrization \eref{e:softmax} of its argument.
Recalling the definition of $\mathcal Q_r$ from \aref{a:mf} and denoting $\mathcal F_r = \{\int \psi \nu~:~\supp \nu \in \mathcal Q_r\}$, we further define the norms and constants needed in the following proof as:
\begin{equs}
  \|D\psi\|_{r, \infty} &= \sup_{\omega \in \mathcal Q_r} \|D \psi_\omega\| &L_{D\psi} &= \sup_{\omega, \omega' \in \mathcal Q_r} \frac{\|D\psi_\omega -D\psi_{\omega'}\|}{\|\omega - \omega'\|_2} \,\,\\
  \|D R\|_{r, \infty} &= \sup_{\psi(\cdot; \omega)~:~\omega \in \mathcal Q_r} \|D R_\psi\|   \qquad &L_{DR} &= \sup_{\psi, \psi'\in \mathcal F_r} \frac{\|DR_\psi -DR_{\psi'}\|}{\|\psi - \psi'\|_2}
\end{equs}
where $\|\cdot \|$ denotes the operator norm. While for any $r>0$ the boundedness of $\|D\psi\|_{r, \infty}$, $L_{D\psi}$ results directly from \aref{a:mf}, more work is needed to prove that $\|D R\|_{r, \infty}< \infty$, $L_{DR}< \infty$. We prove this in \lref{l:lipschitzR} below, and proceed with the proof of convergence of the particle dynamics.

For any $r$ and corresponding $\mathcal Q_r$ from \aref{a:mf} we define the set of localized functionals
\begin{equ}
  \EE^{(r)}(\nu) = \begin{cases}
  \EE(\nu) & \text{if } \supp(\nu) \subset \mathcal Q_r\\\infty & \text{else}
\end{cases}
\end{equ}

Furthermore, we say that a coupling $\gamma \in \mathcal M_+^1 (\Omega \times \Omega)$ is an \emph{admissible transport plan} if both its marginals have support in $\mathcal Q_r$ and finite second moments. To every admissible transport plan, for $p\geq 1$ we associate a \emph{transportation cost} $C_p(\gamma) = (\int_{\Omega^2}|\omega-\omega'|^p \d \gamma(\omega,\omega'))^{1/p}$.

We prove the following results: for every $r> 0 $ we have
\begin{enumerate}
  \item There exists $\lambda_r > 0$ such that for any admissible transport plan $\gamma$, defining the interpolation map  $\nu_t^\gamma := (t \Pi_0 + (1-t) \Pi_1)_\# \gamma$, the function $t \mapsto \EE(\nu_t^\gamma)$ is differentiable with Lipschitz continuous derivative with constant $\lambda_r C_2^2(\gamma)$.
  \item Let $\nu_0$ have support in $\mathcal Q_r$. Then for any given transport plan $\gamma$ with first marginal given by $\nu_0$, a velocity field $F$ satisfies
   \begin{equ}
    \EE((\Pi_1)_\#\gamma) \geq \EE(\nu_0) + \int F(u) \cdot(u - u')\d \gamma(u,u') + o(C_2(\gamma))\label{e:satisfied}
   \end{equ}
  if and only if $F(u)$ is in the subdifferential of $D\mathcal E_\nu(u) := C_\pi[\psi, Q_\pi - \log \pi](u)$ for $g\in L^2(\SS \times \AA)$ (projected in the interior of $\mathcal Q_r$ when $u \in \partial \mathcal Q_r$ ) for $\nu_0$ almost every $u\in \Omega$.
  \end{enumerate}
  The proof of the two points above corresponds to \cite[Lemma B.2]{ChizatBach18}. We sketch the proof of these two points below, referring to the original reference for the details.
\begin{enumerate}
  \item By the Lipschitz continuity of $\psi~:~\Omega \to L^2(\SS \times \AA)$ and $R'(f) = DR_f = C_\pi[\cdot, Q_\pi - \tau \log \pi]$ in $\mathcal Q_r$, the energy $ \EE^{(r)}(\nu_t^\gamma)$ transported  along an interpolating path $\nu_t^\gamma$ is differentiable and we can write its derivative as
  \begin{equ}
    \frac \d {\d t}\EE^{(r)}(\nu_t^\gamma) = \int R'(\int \psi \nu_t^\gamma) \int D \psi_{(1-t)\omega + t\omega'} (\omega'-\omega) \d \gamma(\omega,\omega')
  \end{equ}
  Then, again by the Lipschitz continuity of $D \psi$ and $DR$ we have for $0\leq t'<t'' < 1$
  \begin{equs}
    \pd{\frac \d {\d t}\EE^{(r)}(\nu_{t'}^\gamma)-\frac \d {\d t}\EE^{(r)}(\nu_{t''}^\gamma)}& \leq \pd{\int (R'(\int \psi \nu_{t'}^\gamma) - R'(\int \psi \nu_{t''}^\gamma)) \int D \psi_{(1-t')\omega + t'\omega'} (\omega'-\omega) \d \gamma(\omega,\omega')} \\
    & + \pd{\int R'(\int \psi \nu_{t''}^\gamma) \int (D \psi_{(1-t')\omega + t'\omega'} - D \psi_{(1-t'')\omega + t''\omega'}) (\omega'-\omega) \d \gamma(\omega,\omega')}\\
    &\leq \lambda_r |t'' - t'|
  \end{equs}
for a $\lambda_r$ large enough, where in the last inequality we have used the uniform bounds on $DR$ in $\mathcal Q_r$, that $|D \psi_{(1-t')\omega + t'\omega'} - D \psi_{(1-t'')\omega + t''\omega'}| \leq (t'-t'') L_{D \psi}|\omega-\omega'| $ and we applied H\"older's inequality to bound $C_1^2(\gamma) \leq C_2^2(\gamma)$.
\item The proof of this result leverages an expansion of the functionals $R, \psi$ to the second order in their arguments:
\begin{equs}
  \psi(\omega') &= \psi(\omega) + D \psi_\omega (\omega' - \omega) + \mathcal R_\psi(\omega, \omega')\\
  R(g) & = R(f) + D R_f (g-f) + \mathcal R_R(f,g)
\end{equs}
Recalling the Lipschitz bounds on the remainders $\mathcal R_\psi(\omega, \omega')< \frac 1 2 L_{D\psi}|\omega - \omega'|^2$, $\mathcal R_R(f,g) < \frac 1 2 L_{DR}\|f-g\|_2^2$ and combining the two expansions above we have, for a transport plan $\gamma$ with marginals $\nu_0 = (\Pi_0)_\# \gamma$, $\nu_1 = (\Pi_1)_\# \gamma$
\begin{equ}
  \EE^{(r)}(\nu_1) =   \EE^{(r)}(\nu_0) + \int  R'(\int \psi \nu_0) D \psi_u(u'-u) \d \gamma(u,u') \d s \d a + \mathcal R
\end{equ}
for a remainder term $\mathcal R$. We can bound such remainder term, again by the Lipschitz regularity of $D\psi$ and $DR$ and by the boundedness of $D\psi, DR$ in $\mathcal Q_r$, by $C_2(\gamma)^2$ and $C_1(\gamma)^2 \leq C_2(\gamma)^2$, thereby obtaining that
\begin{equ}
  \EE^{(r)}(\nu_1) =   \EE^{(r)}(\nu_0) + \int  R'(\int \psi \nu_0) D \psi_u(u'-u) \,\d s \d a\, \d \gamma(u,u') + o (C_2(\gamma))
\end{equ}
Noting that the integrand against the coupling is the gradient flow vector field, the above uniquely characterizes the velocity field satisfying \eref{e:satisfied}.
\end{enumerate}

  We note that point 1) above immediately implies that $\EE(\cdot)$ is $\lambda_r$-semiconvex along geodesics, while by 2) $\EE^{(r)}(\,\cdot\,)$ admits strong Wasserstein subdifferentials on its domain \cite[Definition 10.3.1]{ambrosio08}. Combining the two results one obtains existence and uniqueness of the solutions of the Wasserstein gradient flow through \cite[Theorem 11.2.1]{ambrosio08}.

\subsection{Propagation of chaos}

By the Lipschitz continuity of the transport field in \eref{e:mfpde} in $\nu_0$ with $\supp \nu_0 \subset \mathcal Q_r$, there exists a time $t_r > 0$ such that, $\supp \nu_s^{(N)} \subset \mathcal Q_r$ for all $s \in [0,t_r]$, $N \in \mathbb N$. Consider now two times $0\leq t_1<t_2\leq t_r$. To prove the existence of the limiting curve $(\nu_t)_t$, we show that the curves $\nu_t^{(N)}$ are uniformly in $N$ equicontinuous in $W_2$ and as such, possess a converging subsequence by Arzela-Ascoli theorem. To show equicontinuity, we bound the the $W_2$ distance between distributions by coupling positions of the same particles at different times and using Cauchy-Schwartz inequality:
\begin{equ}
  W_2(\nu_{t_1}^{(N)}, \nu_{t_2}^{(N)})^2 \leq \frac 1 N \sum_{i =1}^N \|w_{t_1}^{(i)} - w_{t_2}^{(i)}\|_2^2 \leq \frac {t_2 - t_1} N \sum_{i = 1}^N  \int_{t_1}^{t_2}\|\frac \d {\d s}w_s^{(i)}\|_2^2 \d s
  \end{equ}
  Combining the above with the identity
   \begin{equ}
  \frac \d {\d t} \mathcal E(\nu_t^{(N)}) = \frac 1 N \sum_{i=1}^N \dtp{\nabla_{w_i} \mathcal E(\nu_t^{(N)}), \frac \d {\d t} w_t^{(i)}} =  \frac 1 N \sum_{i=1}^N \|\frac \d {\d t} w_t^{(i)}\|_2^2
\end{equ}
we have
\begin{equ}
W_2(\nu_{t_1}^{(N)}, \nu_{t_2}^{(N)}) \leq \sqrt{t_2-t_1}\sqrt{\int_{t_1}^{t_2}\frac \d {\d s} \mathcal E(\nu_s^{(N)}) \d s} \leq \sqrt{t_2-t_1} \pc{\sup_{\supp \nu \in \mathcal \mathcal Q_r} \mathcal E(\nu) - \inf_{\supp \nu \in \mathcal Q_r}\mathcal E(\nu)}^{1/2}
\end{equ}
where we recall that $\mathcal  F_r = \{\nu \in \mathcal M_+^1(\Omega)~:~\supp \nu \subset \mathcal Q_r\}$. In particular, the above continuity bound is independent on $N$, proving equicontinuity of $\nu^{(N)}$ in $W_2$.

We now prove that the limiting point of the converging subsequence whose existence was identified above must solve \eref{e:mfpde}. To do so we compare both the differential for the mean-field and particle dynamics to the one of a linear inhomogneous \abbr{pde}: for any bounded and continuous $f~:~\mathbb R \times \mathbb R^m \to \mathbb R^m$, denoting by $E = \nu_t F_t \d t$ and $E_N = \nu_t^{(N)} F_t^{(N)} \d t$ where $F_t,F_t^{(N)}$ are the vector fields of the mean-field and particle system respectively, we write
\begin{equ}
  \pd{\int f(\omega) \d(E - E_N)}_2 \leq \|f\|_\infty \int \pd{F_t^{(N)} - F_t}_2  \d \nu_t^{(N)} \d t + \pd{\int f F_t \d (\nu_t^{(N)} - \nu_t)\d t}\,.
\end{equ}
By boundedness of $ f F_t$ over $\mathcal Q_r$, the second term converges by our choice of subsequence. For the first term, denoting throughout by $\|\cdot\|_{BL}$ the bounded Lipschitz norm, we leverage the Lipschitz continuity of the vector field $F$ \abbr{wrt} the underlying parametric measure:
\begin{equ}
  \|F_t^{(N)} - F_t\|_2 \leq C_r \|\nu_t - \nu_t^{(N)}\|_{BL}
\end{equ}
for $C_r>0$ large enough, again obtaining convergence by our choice of subsequence.
This proves convergence of the particle model to the mean-field equation \eref{e:mfpde} on $[0, t_r]$. To extend the time interval on which we prove convergence, we use that $\mathcal E(\nu)$ (and thus $R$) decays along trajectories of \eref{e:mfpde}. Consequently, by the boundedness of the differential $DR$ on sublevel sets of $R$ the Lipschitz constant of $\mathcal E(\nu_t)$ is uniformly bounded. Using again the Lipschitz continuity of $F$ we can show that $\sup_{u \in \mathcal Q_r}\|F\|<A + B r$, \ie that particle velocities can grow at most linearly in $r$, and an application of Gronwall allows to find, for every $T>0$ that there exists $r>0$ such that $\supp \nu_t \in \mathcal Q_r$ for all $t \in [0,T]$ and propagation of chaos follows.
\end{proof}

\section{Proofs of optimality}

 \begin{reptheorem}{t:mf}
    Let \aref{a:mf} hold and $\nu_t$ given by \eqref{e:mfpde} converge
    to $\nu^*$, then $\pi_{\nu^*} = \pi^*$ $\SS\times \AA$-a.e.
 \end{reptheorem}
We prove the above result as sketched in the bandit setting by first establishing optimality of full support stationary measures in parametric space through \pref{p:mf}, and then investigating how the dynamics preserves full support property for any $t>0$ in \lref{l:support} and avoids spurious minima in \lref{l:spurious}. Before starting this program we introduce the equivalent of greedy policies in the entropy-regularized setting. For a given $Q(s,a)$, the associated \emph{Boltzmann policy} $\pi_{\mathcal B}$ with respect to a reference measure $\bar \pi$ is given by
\begin{equ}
  \pi_{\mathcal B}(s,a) := \exp\pq{(Q(s,a) - V_Q(s))/\tau} \qquad \text{for } \quad  V_Q(s) := \tau \log \Ex{a \sim \bar \pi}{\exp\pq{Q(s,a)/\tau}}\,,
\end{equ}
and satisfies
\begin{equ}
\pi_{\mathcal B}(s,\cdot) = {\arg\max}_{\pi \in \mathcal M_+^1(\mathcal A)} \pc{\Ex{a \sim \pi}{Q(s,a)}- \tau D_{\mathrm{KL}}(\pi; \bar \pi)(s)}
\end{equ}
One can then define the \emph{Boltzmann backup} or \emph{soft Bellman backup} operator $T^\tau$ that, for a given $Q$ and the associated Boltzmann policy  $\pi_{\mathcal B}$, gives the action-value function $T^\tau Q$ associated to $\pi_{\mathcal B}$:
\begin{equ}\label{e:boltzmannupdate}
  T^\tau Q(s,a) = \bar r(s,a) + \gamma \tau \Ex{\pi}{\log \Ex{a_1 \sim \bar \pi}{\exp{Q(s_1,a_1)/\tau}}|s_0= s}
\end{equ}
It is known \cite[Theorem 3]{haarnoja18} that the above operator is a contraction, and that it therefore has a unique fixed point, corresponding to the optimal policy $\pi_{\mathcal B}^* = \pi^*$.

With the above information at hand we now proceed to prove \pref{p:mf}:
 \begin{repproposition}{p:mf}
  Let $\nu^*$ be a fixed point of \eqref{e:mfpde} such that the full support property of \aref{a:mf} c) holds at $\nu^*$, then $\pi_{\nu^*} = \pi^*$.
 \end{repproposition}

\begin{proof}[Proof of \pref{p:mf}]
  A fixed point $\nu = \nu^*$ of \eref{e:mfpde} must satisfy
\begin{equ} \label{e:support2}
{\int_{\SS \times \AA} \nabla_\omega \psi(\omega';s,a) \pc{Q_{\pi_{\nu}}(s,a) - \tau \log \pi_\nu(s,a) -V_{\pi_{\nu}}(s)}   \pi_\nu(s,\d a)} \rho_{\pi_\nu}(\d s) = 0\,.
  \end{equ}
  for all $\omega' \in \supp(\nu)$.
  In particular, the first component (in $\omega_0$ direction) of the above gradient must be $0$:
  \begin{equ}\label{e:gradient0}
    {{\int_{\SS\times \AA} \phi(\bar \omega';s,a) \pc{Q_{\pi_{\nu}}(s,a) - \tau \log \pi_\nu(s,a) -V_{\pi_{\nu}}(s)}   \pi_\nu(s,\d a)} \rho_{\pi_\nu}(\d s)} = 0\,,
    \end{equ}
    and by homogeneity of $\psi$ the above identity must hold for all $\bar \omega'$ in the projection of $\supp(\nu)$ on $\Theta$.
    Then, by the full support property of $\nu$ from
    \aref{a:mf} c),
    \eref{e:gradient0} must hold for all $ \bar \omega' \in \Theta$.

     To deduce from this the desired invariance in $\Omega$, we apply \aref{a:mf} b) on the universal approximation property of the nonlinearity $\phi$ and the full support property of $\pi_\nu$ given by the chosen parametrization, which imply that
     \begin{equ}
       {Q_{\pi_{\nu}}(s,a) - \tau \log \pi_\nu(s,a)} = V_{\pi_{\nu}}(s) \quad (\SS \times \AA )\text{-a.e.}\,.
     \end{equ}
    Finally, we rewrite the above condition in compact notation as the fixed point equation
     \begin{equ}
      T^\tau Q_{\pi_{\nu}}(s,a) = Q_{\pi_{\nu}}(s,a)
     \end{equ}
    for the \emph{soft Q learning} or \emph{Boltzmann backup} operator $T^\tau$ defined in \eref{e:boltzmannupdate}.
    Since $T^\tau$ is a contraction in $\|\,\cdot\,\|_{2}$ for $\gamma<1$ \cite{haarnoja18}, its fixed point is unique and hence $\pi_{\nu^*} = \pi_{\mathcal B}[Q^*] = \pi^*$ for almost every $(s,a) \in \mathcal S\times \AA$.
  \end{proof}

Before proceeding to prove \tref{t:mf}, we state the alternative form of \aref{a:mf} a) in the case where $\Theta \neq \Rr^{m-1}$. Our proof of the theorem above can be easily generalized.

\begin{assumption} \label{a:mf2} Assume that $\omega = (\omega_0, \bar \omega) \in \Rr \times \Theta$ for $\Theta \subset  \Rr^{m-1}$ which is the closure of a bounded open convex set. Furthermore $\psi(s,a; \omega) = \omega_0 \phi(s,a;\bar \omega)$  where $\phi$ is bounded, differentiable and $D\phi_\omega$ is Lipschitz.
Also, for all $f \in L^2(\SS \times \AA)$ the regular values of the map {$\bar \omega \mapsto g_f(\bar \omega):= \int f(s,a) \phi(s,a; \bar \omega)\d a \d s$} are dense in its range and $g_f(\bar \omega)$ satisfies Neumann boundary conditions (i.e., for all $\bar \omega \in \partial \Theta$ we have $d g_f(\bar \omega)(n_{\bar \omega}) = 0$ where $n_{\bar \omega} \in \Rr^{m-1}$ is the normal of $\partial \Theta$ at $\bar \omega$).
\end{assumption}

  It is instrumental for the following proof to note that in light of the above spurious minima necessarily correspond to distributions $\nu$ such that
       \begin{equ}\label{e:fixedpoint}
        {{\int_{\SS\times \AA} \phi(\bar \omega;s,a) \pc{Q_{\pi_{\nu}}(s,a) - \tau \log \pi_\nu(s,a)
        -V_{\pi_{\nu}}(s)}   \pi_\nu(s,\d a)} \rho_{\pi_\nu}(\d s)} = 0\,,
      \end{equ}
   does not hold $\bar \omega'$-almost everywhere in $\Theta$.
  Indeed, if this is the case we must have that $Q_{\pi_\nu} \neq T^\tau Q_{\pi_\nu}$ on some subset of $\mathcal A \times \SS$ with positive measure so that ultimately $Q_{\pi_\nu} \neq Q_{\pi^*}$.

     \subsection{Proof of \tref{t:mf}}

We prove below that spurious local minima that do not satisfy \eref{e:fixedpoint} $\Theta$-a.e. are avoided by the dynamics. We do so by leveraging the approximate gradient structure of the policy gradient vector field when $\nu_t$ is close to one of such stationary points, as discussed in the main text. Combining this fact with the assumed convergence to $\nu^*$ proves \tref{t:mf}.

We note that by the assumed homogeneity of $\psi$ in its first component,
if $\nu$, $\nu'$ are such that
\begin{equ}\label{e:equivalence}
  \int \omega_0 \nu(\d \omega_0, \d \bar \omega) = \int \omega_0 \nu'(\d \omega_0, \d \bar \omega) \quad \text{a.e.}
  \end{equ}
  then
     \begin{equ}
      f_{\nu}(\,\cdot\,) = \int \omega_0 \phi(\,\cdot\,; \bar \omega ) \nu(\d \omega_0, \d \bar \omega) = \int \omega_0 \phi(\,\cdot\, ; \bar \omega) \nu'(\d \omega_0, \d \bar \omega) = f_{\nu'}(\,\cdot\,) \,,
     \end{equ}
    so that in turn we have $\pi_{\nu} = \pi_{\nu'}$ a.e..
    In other words, the homogeneity of the chosen class of approximators results in a degeneracy of the map $\psi~:~\mathcal M_+^1(\Omega) \mapsto L^2(\S \times \AA)$.
    To remove this degeneracy in our analysis, we identify all the distributions $\nu, \nu'$ that are equivalent under \eref{e:equivalence}  by defining the signed measure
     \begin{equ}\label{e:h}
      h_\nu^1(\d \bar \omega) := \int \omega_0 \nu(\d \omega_0, \d \bar \omega)
      \end{equ}
      Leveraging this definition, we prove the desired result \tref{t:mf} in two key steps: we show that
      \begin{enumerate}
        \item  The solution to \eref{e:mfpde} does not lose (projected) support for any finite time, thereby preserving the property from \aref{a:mf} c),
        \item Stationary points $\tilde \nu$ with $Q_{\pi_{\tilde \nu}} \neq Q_{\pi^*}$ -- which by \tref{t:mf} cannot have full support -- are avoided by the dynamics.
      \end{enumerate}
        These facts are respectively summarized in the following lemmas:
     {\begin{lemma}\label{l:support}
    Let \aref{a:mf} a) hold and let $\nu_0$ satisfy \aref{a:mf} c), then for every $t > 0$, $\nu_t$ solving \eref{e:mfpde} with initial condition $\nu_0$ also satisfies \aref{a:mf} c).
   \end{lemma}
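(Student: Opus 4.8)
The plan is to represent the solution of \eref{e:mfpde} through its characteristics and then argue that the resulting flow, being a homeomorphism of $\Omega$ onto itself, cannot destroy the topological separation encoded in \aref{a:mf} c). Since \eref{e:mfpde} is the continuity equation $\partial_t \nu_t = -\mathrm{div}(\nu_t F_t)$ with $F_t(\omega) = F_t(\omega;\nu_t)$, and since $\nu_t$ is the unique solution guaranteed by \tref{t:particles}, I would freeze the self-consistent measure $\nu_t$ and view $F_t(\,\cdot\,;\nu_t)$ as a given time-dependent vector field. The regularity established in the proof of \tref{t:particles} (boundedness of $\|D\psi\|_{r,\infty}$, $L_{D\psi}$, and Lipschitz continuity of $DR$) makes $F_t(\,\cdot\,;\nu_t)$ Lipschitz in $\omega$ on every slab $\mathcal Q_R$, so the characteristic ODE $\dot\omega = F_t(\omega;\nu_t)$ generates a unique flow $X_t$ that is a homeomorphism of $\Omega$. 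Consequently $\nu_t = (X_t)_\# \nu_0$ and $\supp \nu_t = X_t(\supp \nu_0)$.

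The crucial quantitative input is the homogeneity from \aref{a:mf}: writing $\psi = \omega_0 \phi$, the $\omega_0$-component of $F_t$ equals $G(\bar\omega;\nu_t) = \int_{\SS\times\AA} \phi(s,a;\bar\omega)\,\delta(\nu_t)(s,a)\,\pi_{\nu_t}(s,\d a)\,\rho_{\pi_{\nu_t}}(\d s)$, with $\delta(\nu) = Q_{\pi_\nu}-\tau\log\pi_\nu - V_{\pi_\nu}$. This is independent of $\omega_0$ and bounded, say by $C$, uniformly on any finite interval $[0,T]$: boundedness of $\phi$ together with the a priori containment $\supp\nu_s \subset \mathcal Q_{R(T)}$ from \tref{t:particles} keeps $f_{\nu_s}$, and hence $\delta(\nu_s)$, bounded. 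Therefore along every forward and backward characteristic the first coordinate moves by at most $tC$, which yields both the containment $\supp\nu_t = X_t(\supp\nu_0) \subset \mathcal Q_{r'}$ with $r' := r + tC$ and precise control on how the faces of the slab are displaced.

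To exploit this I would first restate \aref{a:mf} c) in a homeomorphism-friendly form: since $\Theta = \Rr^{m-1}$, the support $\supp\nu_0$ separates $\{-r\}\times\Theta$ from $\{r\}\times\Theta$ inside $\mathcal Q_r$ if and only if it separates the half-spaces $\{\omega_0 \leq -r\}$ and $\{\omega_0\geq r\}$ in all of $\Omega$; this is an elementary intermediate-value argument, since any path joining the two half-spaces contains a subpath that stays in $\mathcal Q_r$ and joins the two faces. Now take any continuous path $c:[0,1]\to\mathcal Q_{r'}$ from $\{-r'\}\times\Theta$ to $\{r'\}\times\Theta$ and pull it back to $\tilde c := X_t^{-1}\circ c$. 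By the $\omega_0$-displacement bound, $\tilde c(0)$ has first coordinate at most $-r'+tC = -r$ and $\tilde c(1)$ has first coordinate at least $r'-tC = r$, so $\tilde c$ joins $\{\omega_0\leq -r\}$ to $\{\omega_0\geq r\}$ and therefore meets $\supp\nu_0$ by the reformulated separation property. Since $\tilde c = X_t^{-1}\circ c$, the original path $c$ meets $X_t(\supp\nu_0) = \supp\nu_t$. As $c$ was arbitrary, $\supp\nu_t$ separates the faces of $\mathcal Q_{r'}$, i.e. \aref{a:mf} c) holds at time $t$.

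The main obstacle is the first step: making rigorous that $\nu_t$ is genuinely transported by a homeomorphic flow. This hinges on the well-posedness and Lipschitz regularity of the self-consistent field $F_t(\,\cdot\,;\nu_t)$, which is not automatic since $F_t$ depends on $\nu_t$ through the softmax policy and the discounted occupation measure $\rho_{\pi_{\nu_t}}$; here I would lean entirely on the regularity machinery developed for \tref{t:particles}. A secondary, purely topological subtlety is that $X_t$ deforms the faces of the slab, so the separation cannot be transported verbatim; the homogeneity-driven bound on the $\omega_0$-velocity is exactly what lets me enlarge the slab from $r$ to $r' = r+tC$ and recover separation. This is also why the argument is confined to finite times, as $C$ need not remain controlled as $t\to\infty$, which is precisely the reason the passage to the stationary point requires the separate argument of \lref{l:spurious}.
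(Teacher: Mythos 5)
Your proposal is correct and takes essentially the same route as the paper: both represent $\nu_t$ as the pushforward of $\nu_0$ along the characteristic flow of \eref{e:mfpde} and argue that the regularity of this flow (guaranteed by \aref{a:mf} a), via the machinery behind \tref{t:particles}) preserves the topological separation of \aref{a:mf} c). The only difference is that the paper delegates the topological step to \cite[Lemma C.13]{ChizatBach18}, which requires only continuity of $u \mapsto X(t,u)$, whereas you make it self-contained by pulling paths back through the inverse flow and using the homogeneity-driven bound on the $\omega_0$-velocity to pass to the enlarged slab $\mathcal Q_{r+tC}$ --- a legitimate filling-in of the cited argument, including the slab enlargement that the paper leaves implicit in the ``there exists $\ro>0$'' form of \aref{a:mf} c).
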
}
\noindent Throughout, we let $\|\,\cdot\,\|_{BL}$ denote the bounded Lipschitz norm.
    \begin{lemma}\label{l:spurious}
      Let \aref{a:mf} hold and let $\tilde \nu$ be a fixed point of \eref{e:mfpde} such that \eref{e:fixedpoint} does not hold a.e.. Then there exists $\epsilon>0$ such that if $\|h_{\tilde \nu}^1-h_{\nu_{t_1}}^1\|_{BL}< \epsilon$ for a $t_1>0$ there exists $t_2> t_1$ such that $\|h_{\tilde \nu}^1-h_{\nu_{t_2}}^1\|_{BL} > \epsilon$.
    \end{lemma}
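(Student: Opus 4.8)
The plan is to argue by contradiction, exploiting the approximate gradient-flow structure of the dynamics near the spurious fixed point $\tilde\nu$. Since $\tilde\nu$ fails to satisfy \eref{e:fixedpoint} a.e., the ``error function'' $\delta(\tilde\nu)(s,a) := Q_{\pi_{\tilde\nu}}(s,a) - \tau\log\pi_{\tilde\nu}(s,a) - V_{\pi_{\tilde\nu}}(s)$ has nonzero projection onto some $\phi(\cdot;\bar\omega^*)$, i.e. there is a direction $\bar\omega^*\in\Theta$ at which the velocity field $F(\,\cdot\,;\tilde\nu)$ in the $\omega_0$-component is bounded away from zero. By \aref{a:mf} a), the rescaled map $g_{\delta(\tilde\nu)}(r\bar\omega)$ converges in $C^1$ on the sphere, so I would first locate a \emph{regular} direction $\bar\omega^*$ at which this limiting velocity component $\bar g_{\delta(\tilde\nu)}(\bar\omega^*)$ is strictly positive (WLOG), and is non-degenerate so that a whole open cone of directions near $\bar\omega^*$ inherits a positive lower bound $c>0$.

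First I would make the ``frozen potential'' approximation rigorous. Starting from $\|h^1_{\tilde\nu} - h^1_{\nu_{t_1}}\|_{BL}<\epsilon$, I invoke \lref{l:boundong} (the stated bound on the change of the velocity field) to guarantee that for all $t>t_1$, as long as we remain within the $\epsilon$-neighborhood, the drift $F_t$ stays $C^0$-close to the fixed field $F(\,\cdot\,;\tilde\nu)$; the policy $\pi_{\nu_t}$ depends on $\nu_t$ only through $h^1_{\nu_t}$, so a bounded-Lipschitz bound on $h^1$ controls $\pi_{\nu_t}$, hence $Q_{\pi_{\nu_t}}$, $\rho_{\pi_{\nu_t}}$ and the whole vector field uniformly. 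In this regime the particle dynamics \eref{e:mfode} reduce to motion in a nearly static potential. The crucial geometric step is then to identify a forward-invariant cone $\mathcal C$ in parameter space around the direction $(\,+1,\bar\omega^*)$: using the $1$-homogeneity $\psi = \omega_0\phi$, the $\omega_0$-component of the velocity is $g_{\delta}(\bar\omega)$ (independent of $\omega_0$), while the $\bar\omega$-component is $\omega_0\nabla_{\bar\omega}g_\delta(\bar\omega)$. For $\omega_0$ large and $\bar\omega$ near the regular direction $\bar\omega^*$, the radial growth dominates and the angular drift is controlled, so trajectories entering $\mathcal C$ cannot leave it and satisfy $\dot\omega_0 \ge c>0$, forcing $\omega_0(t)\gtrsim c\,(t-t_1)$ to grow linearly.

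Next I would show a \emph{positive amount} of mass actually enters this cone. Here \aref{a:mf} c) is essential: the separation property of $\supp\nu_0$, propagated to $\supp\nu_t$ for all finite $t$ by \lref{l:support}, together with the homogeneity, guarantees that the projected support meets a neighborhood of $\bar\omega^*$, and the $\omega_0$-dynamics pushes a nonvanishing $\tilde\nu$-measure of particles into $\mathcal C$ where $\omega_0$ then grows without bound. Since $h^1_{\nu_t}(\d\bar\omega) = \int\omega_0\,\nu_t(\d\omega_0,\d\bar\omega)$, a linearly growing $\omega_0$ on a set of positive mass concentrated near $\bar\omega^*$ forces $\int \phi(\,\cdot\,;\bar\omega^*)\,h^1_{\nu_t}(\d\bar\omega)$ — and hence $\|h^1_{\nu_t}-h^1_{\tilde\nu}\|_{BL}$ — to exceed $\epsilon$ at some finite $t_2>t_1$, which is exactly the desired conclusion and contradicts the assumed convergence to $\tilde\nu$.

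The main obstacle is the construction and forward-invariance of the cone $\mathcal C$ under the \emph{approximate}, rather than exact, dynamics: one must quantify how large $\omega_0$ must be (equivalently, how small $\epsilon$ and the $C^0$ drift error from \lref{l:boundong} must be) so that the positive radial velocity $c$ survives the perturbation uniformly in time, and so that the angular component does not carry particles out through a regular value where $g_\delta$ changes sign. This is precisely why \aref{a:mf} a) requires the $C^1$ limit $\bar g_f$ with dense regular values: it supplies a nondegenerate direction with a stable sign and bounded angular gradient at infinity, allowing the invariant cone to be closed off and the linear-growth lower bound to be maintained for all $t>t_1$ until the $BL$-norm threshold is crossed.
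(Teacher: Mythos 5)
Your skeleton matches the paper's: argue by contradiction, freeze the potential via \lref{l:boundong} (which indeed controls $g_{\nu_t}$ in $C^1$ through $\|h^1_{\nu_t}-h^1_{\tilde\nu}\|_{BL}$, since $\pi_\nu$ depends on $\nu$ only through $h^1_\nu$), exhibit a forward-invariant region on which $\omega_0$ grows linearly, and conclude that the $BL$ distance must eventually exceed $\epsilon$. However, two of your key constructions have genuine gaps. First, you anchor everything on a direction $\bar\omega^*$ \emph{at infinity} where the $C^1$-limit $\bar g_{\delta(\tilde\nu)}$ is strictly positive. The hypothesis only gives that $g_{\tilde\nu}$ does not vanish a.e.\ on $\Theta$; this does not imply that its limit on the sphere at infinity is nonzero anywhere ($g_{\tilde\nu}$ can be nonzero only on a bounded region while $\bar g_{\delta(\tilde\nu)}\equiv 0$), so the object your proof starts from may fail to exist. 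The paper instead uses the density of regular values from \aref{a:mf} a) to pick a nonzero regular value $-\eta$ of $g_{\tilde\nu}$ \emph{itself} and works with the sublevel set $\mathcal G=\{g_{\tilde\nu}<-\eta\}$; the limit at infinity enters only in a footnote, to keep the boundary gradient bound uniform when $\bar G$ is unbounded. Second, your cone $\mathcal C$ is not forward invariant, and the justification you give (``for $\omega_0$ large the radial growth dominates and the angular drift is controlled'') is backwards: the angular velocity is $\dot{\bar\omega}=-\omega_0\nabla_{\bar\omega}g$, which scales \emph{linearly} in $\omega_0$, and at a regular direction the transverse push is persistent since $\nabla\bar g\neq 0$; nothing in \aref{a:mf} controls the relative rates well enough to confine trajectories to a fixed cone of directions. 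The paper's region is invariant for a soft, structural reason that a cone lacks: it is a sublevel set, and for $\omega_0>0$ the angular motion is gradient \emph{descent} on $g$ (scaled by the positive factor $\omega_0$), so it cannot increase $g$ and cannot cross $\partial\bar G$ outward; combined with $\dot\omega_0=-g>\eta/2$ and the bound $n_{\bar\omega}\cdot\nabla_{\bar\omega}g\geq\beta/2$ on $\partial\bar G$ (robust under the $C^1$-perturbation allowed by \lref{l:boundong} precisely because $-\eta$ is a regular value), this makes $\mathcal G_+=\mathcal G\cap\{\omega_0>0\}$ forward invariant.

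There is a third gap in how mass enters the good region. You appeal to \aref{a:mf} c) and \lref{l:support} to claim a positive amount of mass is ``pushed into $\mathcal C$,'' but the support property only provides particles whose $\bar\omega$-projection lies in the good region, and these may all have $\omega_0<0$. For $\omega_0<0$ the angular dynamics is gradient \emph{ascent} on $g$, so exactly those particles are being expelled from the good region while their $\omega_0$ rises; it is not automatic that any of them reach $\{\omega_0>0\}$ before exiting. This is why the paper splits into the cases $\nu_{t_1}(\mathcal G_+)>0$ and $\nu_{t_1}(\mathcal G_+)=0$ and, in the latter, needs \lref{l:badcase}: one selects a supported particle at a local \emph{minimum} of $g_{\tilde\nu}$ inside $\bar G$, where $\nabla g_{\tilde\nu}=0$ so the angular speed is only of order $\epsilon+Lq(t)$, and runs a Gr\"onwall estimate showing that $\omega_0$ becomes positive before the angular excursion $q(t)$ can leave the region; continuity of the flow map then yields $\nu_{t_2}(\mathcal G_+)>0$, reducing to the first case. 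Without this step (or a substitute for it), your argument covers only the favorable situation in which positive mass already sits in the invariant region with $\omega_0>0$ at time $t_1$.
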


     {\begin{proof}[Proof of \lref{l:support}]
    Analogously to \cite[Lemma C.13]{ChizatBach18}, we aim to show that the separation property \aref{a:mf} c) is preserved by the evolution of $\nu_0$ along the characteristic curves $X(t,u)$  solving
    \begin{equ}\label{e:Xt}
      \partial_t X(t,u) = F_t(X(t,u);\nu_t),
    \end{equ}
    where $F_t$ is the transport field in  \eref{e:mfpde}.
      To reach this conclusion, the analogous result in \cite{ChizatBach18} only relies on the \emph{continuity} of the map $u \mapsto X(t,u)$, established in \citep[Lemma B.4]{ChizatBach18} under  \aref{a:mf} a).
      Hence, it is sufficient for our purposes to establish continuity of the map $X(t,\cdot)$ from \eref{e:Xt}. This property, however results immediately  from the one-sided Lipschitz continuity of the vector field $F_t$ on $\mathcal Q_r = [-r,r]\times \Theta$ uniformly on compact time intervals, which  is in turn guaranteed by the Lipschitz continuity and Lipschitz smoothness of $\psi$ from \aref{a:mf} and boundedness of $r$.
    \end{proof}}


         To simplify the notation in the following proof, we denote throughout
         \begin{equ}
           \delta(\nu) := Q_{\pi_\nu}- \tau \log \pi_\nu - V_{\pi_\nu}\qquad \text{and} \qquad \dtp{f,g}_{\pi}:= \int_{\SS \times \AA} f(s,a) g(s,a) \pi(s, \d a) \rho_\pi(\d s)\,.
           \end{equ}

     \begin{proof}[Proof of \lref{l:spurious}]

      We first claim that by \tref{t:mf}, for any spurious fixed point (such that $Q_{\pi_{\tilde \nu}} \neq Q_{\pi^*}$ a.e.), there must exist a subset of  $\Theta$ with positive Lebesgue measure where $\tilde \nu$ loses support and such that $\dtp{\nabla \psi, \delta( \tilde \nu)}_\pi \neq 0$.  This is easily proven by contradiction: if $\dtp{\nabla \psi,  \delta(\tilde \nu)}_\pi = 0$ a.e. then following the proof of \tref{t:mf} from after \eref{e:support2} we have that $Q_{\pi_{\tilde \nu}} = Q_{\pi^*}$.
      This implies that the quantity
       \begin{equ}\label{e:g}
      g_{\tilde \nu}(\bar \omega) := \dtp{\partial_{\omega_0}\psi(\cdot; \omega), \delta({\tilde \nu})}_{\pi_\nu} = \dtp{\psi(\cdot; (1,\bar \omega)), \delta({\tilde \nu})}_{\pi_\nu} = \dtp{\phi(\cdot; \bar \omega), \delta({\tilde \nu})}_{\pi_\nu}
       \end{equ}
      cannot vanish a.e. on $\Theta$.
      Then, by \aref{a:mf} on the regularity of $g$, there exists a nonzero regular value $-\eta$ of $g_{\tilde \nu}(\bar \omega)$. Assuming without loss of generality that this regular value is negative, so that $\eta >0$ (else invert the signs of $\omega_0$ in the remainder of the proof), 
      we define the nonempty sublevel set $\mathcal G := \{(\omega_0,\bar \omega) \in \Omega~:~g_{\tilde \nu}(\bar \omega) < -\eta \}$ and
       \begin{equ}\label{e:A}
      \mathcal G_+ = \{(\omega_0,\bar \omega) \in \mathcal G~:~ \omega_0 >0\}\,.
      \end{equ}
        Further denoting by $\bar G \subseteq \Theta$ the projection of $\mathcal G$ onto $\Theta$, we have by definition that the gradient field of $g_{\tilde\nu}(\bar \omega)$ is orthogonal to the level set $\partial \bar G$, the latter being an orientable manifold of dimension $m-2$.
      Denoting by $n_{\bar \omega}$ the normal unit vector to $\partial \bar G$ in the outward direction, by continuity of $\nabla g_{\tilde \nu}(\bar \omega)$ when $\bar G$ is compact\footnote{if $\bar G$ is not compact we choose $\eta$ to also be a regular value of the function on $\{\bar \omega \in \Rr^{m-1}~:~\|\bar \omega\|_2=1\}$ to which $g$ converges as $\bar \omega$ goes to infinity.} we can bound the scalar product between the two away from 0, \ie there exists
       \begin{equ}
       \beta := \min_{\bar \omega \in \partial \bar G} n_{\bar \omega} \cdot \nabla_{\bar \omega} g_{\tilde \nu}(\bar \omega)  > 0 \, .
      \end{equ} 

            We now prove that the stationarity assumption in a $\epsilon$-neighborhood of a spurious fixed point
            \begin{equ}\label{e:beh}
              \|h_{\tilde \nu}^1-h_{\nu_{t}}^1\|_{BL}< \epsilon \qquad \text{for all } t> t_1\,,
              \end{equ}
                leads to a contradiction for $\epsilon$ small enough. To do so, by \lref{l:boundong} we set $\epsilon(\alpha, \eta, \beta)$ small enough so that for all $\nu_{t}$ such that \eref{e:beh} holds we have $g_{\nu_{t}}(\bar \omega) < -\eta / 2$ on $\bar G$ and $n_{\bar \omega} \cdot\nabla_{\bar \omega} g_{\nu_{t}} > \beta/2$ on $\partial \bar G$.
            Then, the two inequalities above combined with $\partial_{\omega_0} \psi(\omega_0,\bar \omega) = \psi(1,\bar \omega)$ imply that the set $\mathcal G_+$ defined above is forward invariant and therefore that $\partial_t \nu_t(\mathcal G_+) \geq 0$ as long as \eref{e:beh} holds.  Furthermore, by similar arguments we notice that characteristic trajectories cannot enter the set $\mathcal G \setminus \mathcal G_+$ after $t_1$.

      Now, we consider two cases: either (i) a positive amount of mass is present at $t_1$ in the forward invariant set $\mathcal G_+$ ($\nu_{t_1}(\mathcal G_+)> 0 $) or (ii) $\nu_{t_1}(\mathcal G_+) = 0 $. We discuss these two cases separately, along the lines of \cite[Lemmas C.4, C.18]{ChizatBach18}, respectively.
       \begin{enumerate}[(i)]
        \item Assume that $\nu_{t_1}(\mathcal G_+)> 0$. We note that under our assumptions the first component of the velocity field in $\mathcal G$ is lower bounded by $\eta/2$, so that $\omega_0(t) = \omega_0(0) + t \eta/2 $ bounds from below the $\omega_0$-component of the trajectory of a test mass with initial condition with $\omega(0) \in \mathcal G$, as long as $\bar \omega(t) \in \bar G$. Combining this bound with the forward invariance of $\mathcal G_+$, we see that if $\omega(0) \in \mathcal G_+$ then $\omega_0(t)>  t \eta/2 $.
        Consequently, assuming that $\text{supp}(\nu_t) \subset (-M,M)\times \Theta$ for every $t>t_1$ we have
         \begin{equ}
          h_{\nu_t}^1(\bar G) \geq \eta/2 (t-t_1) \nu_{t_1}(\mathcal G_+) + \min\{0, (t-t_1)\eta/2  - M\}\nu_{t_1}(\mathcal G \setminus \mathcal G_+)\,.
         \end{equ}
      This implies linear growth of $h_{\nu_t}^1(\bar G)$ for $t> t_1 + 2 M/\eta$, contradicting the original assumption that $\|h_{\tilde \nu}^1-h_{\nu_{t_1}}^1\|_{BL}< \epsilon$ for all $t> t_1$.
        \item Consider now the complementary case $\nu_{t_1}(\mathcal G_+)= 0$. We proceed to show that there exists $t_2> t_1$ such that $\nu_{t_2}(\mathcal G_+)> 0$, thereby reducing this case at time $t_2$ to part (i). To do so, we consider $\omega^* \in \text{supp}(\nu_{t_1})$ such that $\bar \omega^*\in \bar G$ is a local minimum of $g_{\tilde \nu}$, \ie for which $\nabla g_{\tilde \nu} = 0$ (which exists by the preservation of the support property \aref{a:mf} c) ).
        Then, choosing $\tilde \epsilon$ such that $\BB_{\tilde \epsilon}(\bar \omega^*) \subset \bar G$, and setting $M$ large enough that $\supp(\nu_{t_1}) \subseteq [-M,M]\times \Theta$, we prove below in  \lref{l:badcase} that there exists $t_2 > t_1$ for which the image at $t_2$ of $\omega(t_1):=\omega^*$ under the characteristic flow \eref{e:Xt} is contained in $\mathcal G_+$. By continuity of the flow map $X(\cdot, t)$, this conclusion extends to a neighborhood of $\omega^*$, with positive mass under $\nu_{t_1}$.
       \end{enumerate}
     \end{proof}

   We denote throughout by $\|\,\cdot\,\|_{C^1}$ the maximum of the supremum norm of a
function and the supremum norm of its gradient and recall the structure of the policy gradient vector field
     \begin{equ}\label{e:mftdvf}
      F_t(\omega, \nu_t) = -\nabla \dtp{ \omega_0\phi(\bar \omega),  \delta(\nu_t) }_{\pi_\nu} = - \nabla (\omega_0 \,g_{\nu_t}(\bar \omega))
     \end{equ}
    where $g$ is defined in \eref{e:g} and $\nu_t$ solves \eref{e:mfpde}.

    With these definitions, we proceed to prove that case (ii) in the analysis above will ultimately reduce to case (i) for $t$ large enough.

    \begin{lemma}\label{l:badcase}
    Let  $\tilde \nu \in \mathcal M_+^1(\Omega)$ and $\bar \omega^*$ satisfy $|\nabla g_{\tilde \nu}(\bar \omega^*)| = 0$, $g_{\tilde \nu}(\bar \omega^*) < -\eta < 0$ for some $\eta > 0$. Then for every $ \tilde \epsilon, M>0$ there exists $t_2, \epsilon>0$ such that if for all $t \in (0, t_2)$ we have $\|g_{\tilde \nu}- g_{\nu_t}\|_{C^1} < \epsilon$ and $\omega_0^* \in [-M,0]$,
      then the point $\omega^*$ is mapped, under the flow of the policy gradient vector field \eref{e:mftdvf} at time $t_2$ to a subset of $\BB_{\tilde \epsilon}((1,\bar \omega^*))$.
    \end{lemma}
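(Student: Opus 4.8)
The plan is to analyze the characteristic flow generated by \eref{e:mftdvf} directly, as a perturbation of the idealized flow driven by the frozen potential $g_{\tilde \nu}$. Writing $\omega(t) = (\omega_0(t), \bar\omega(t))$ for the image of $\omega^*$ under the flow, the gradient structure $F_t = -\nabla(\omega_0\, g_{\nu_t}(\bar\omega))$ gives the characteristic system
\begin{equs}
  \dot\omega_0(t) &= -g_{\nu_t}(\bar\omega(t)), \\
  \dot{\bar\omega}(t) &= -\omega_0(t)\, \nabla_{\bar\omega} g_{\nu_t}(\bar\omega(t)),
\end{equs}
with $\omega(0) = (\omega_0^*, \bar\omega^*)$. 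First I would treat the idealized dynamics obtained by replacing $g_{\nu_t}$ with $g_{\tilde\nu}$. Since $\nabla_{\bar\omega} g_{\tilde\nu}(\bar\omega^*) = 0$, the $\bar\omega$-component stays pinned at $\bar\omega^*$, while $\dot\omega_0 = -g_{\tilde\nu}(\bar\omega^*) > \eta > 0$ forces $\omega_0$ to grow linearly, reaching the value $1$ at the finite time $t_2 := (1-\omega_0^*)/(-g_{\tilde\nu}(\bar\omega^*))$. Because $-g_{\tilde\nu}(\bar\omega^*) \in (\eta, \|g_{\tilde\nu}\|_\infty]$ and $\omega_0^* \in [-M,0]$, this time lies in the bounded range $t_2 \leq (1+M)/\eta$, which controls all Gronwall constants below; in the idealized flow the image of $\omega^*$ at time $t_2$ is exactly $(1, \bar\omega^*)$.

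Next I would bound the transverse deviation $\rho(t) := |\bar\omega(t) - \bar\omega^*|$ of the true trajectory. The key point is that $\nabla_{\bar\omega} g_{\tilde\nu}$ is Lipschitz (constant $L$, say) by the Lipschitz smoothness of $\phi$ in \aref{a:mf} a), and vanishes at $\bar\omega^*$, so $|\nabla_{\bar\omega} g_{\tilde\nu}(\bar\omega)| \leq L\rho$. Splitting $\nabla_{\bar\omega} g_{\nu_t} = \nabla_{\bar\omega} g_{\tilde\nu} + (\nabla_{\bar\omega} g_{\nu_t} - \nabla_{\bar\omega} g_{\tilde\nu})$ and using $\|g_{\tilde\nu} - g_{\nu_t}\|_{C^1} < \epsilon$, together with the a priori bound $|\omega_0(t)| \leq M+2$ (valid while $\omega_0$ has not overshot $1$), yields $\dot\rho \leq (M+2)(L\rho + \epsilon)$ with $\rho(0)=0$. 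Gronwall then gives $\rho(t) \leq \frac{\epsilon}{L}\bigl(e^{(M+2)Lt}-1\bigr)$, which over $[0,t_2]$ is at most $C(M,L,\eta)\,\epsilon$; choosing $\epsilon$ small makes the excursion smaller than $\tilde\epsilon/2$.

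Finally I would control $\omega_0$. Along the trajectory $g_{\nu_t}(\bar\omega(t)) = g_{\tilde\nu}(\bar\omega^*) + O(\rho) + O(\epsilon)$, so for $\epsilon$ small the growth rate satisfies $-g_{\nu_t}(\bar\omega(t)) > \eta/2 > 0$, confirming $\omega_0$ is increasing and stays in $[-M, 1+\tilde\epsilon]$ up to $t_2$ (so the a priori bound used above is self-consistent). Integrating the difference against the idealized rate gives $|\omega_0(t_2) - 1| \leq t_2\bigl(\epsilon + \mathrm{Lip}(g_{\tilde\nu})\, C\epsilon\bigr) = O(\epsilon) < \tilde\epsilon/2$. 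Combining the two estimates places $\omega(t_2) \in \BB_{\tilde\epsilon}((1,\bar\omega^*))$, as claimed.

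To make the a priori bound rigorous rather than circular I would wrap the above in a standard continuity (bootstrap) argument: set $\sigma := \inf\{t : \rho(t) = \tilde\epsilon\}$, verify that all estimates hold on $[0, \sigma \wedge t_2]$, and conclude $\sigma > t_2$ once $\epsilon$ is taken small; the monotone growth of $\omega_0$ handles its bound automatically. The main obstacle is precisely this coupling: the $\bar\omega$-drift is multiplied by $\omega_0$, which itself grows, so one must keep $\omega_0$ bounded and $\bar\omega$ near the critical point \emph{simultaneously}. The vanishing of $\nabla_{\bar\omega} g_{\tilde\nu}$ at $\bar\omega^*$ is what breaks this feedback loop and lets Gronwall close over the uniformly bounded horizon $t_2$.
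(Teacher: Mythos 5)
Your proposal is correct and takes essentially the same route as the paper's own proof: both treat the characteristics as a perturbation of the flow for the frozen potential $g_{\tilde \nu}$, exploit $\nabla_{\bar \omega} g_{\tilde \nu}(\bar \omega^*)=0$ to derive the coupled differential inequalities $\dot \omega_0 \geq \eta - \epsilon - L q$ and $\dot q \leq |\omega_0|\,(\epsilon + L q)$, close them via Gr\"onwall and a stopping-time (bootstrap) argument over a horizon of order $(1+M)/\eta$, and choose $\epsilon$ small only at the end. The differences are purely bookkeeping --- the paper tracks the exit time of $\omega_0$ through the level $1$ with transverse threshold $\eta/(4L)$ arranged to be below $\tilde \epsilon$, while you fix $t_2$ from the idealized flow and use threshold $\tilde \epsilon$ directly; for large $\tilde \epsilon$ you should replace your bootstrap threshold by $\min\{\tilde \epsilon, \eta/(4L)\}$ so that $\dot \omega_0 > 0$ is guaranteed inside the bootstrap region, a harmless reduction (the conclusion for a smaller radius implies it for a larger one) which is exactly the normalization the paper makes.
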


     {\begin{proof}[Proof of \lref{l:badcase}]
       By homogeneity of the approximator, we can bound the first  component of the velocity of a particle $(\omega_0(t), \bar \omega(t))$ under \eref{e:mftdvf} with initial condition $\omega(0) = \ws$ as
       \begin{equ}
          \frac \d {\d t}  \omega_0(t) = - g_{\nu_t}(\bar \omega(t)) \geq - g_{\nnu}(\bar \omega^*) - |g_\nnu(\bar \omega(t))- g_\nnu(\ws)| - |g_{\nu_t}(\bar \omega(t))- g_\nnu(\bar \omega(t))|
         \end{equ}
        In the other directions, defining $q(t):= \|\bar \omega(t) - \bar \omega^*\|$, we have
       \begin{equs}
          \frac \d {\d t}  q(t) & \leq |\omega_0(t)| \|\nabla_{\bar \omega} g_{\nu_t}(\bar \omega(t))\| \\&\leq   |\omega_0(t)| \pq{\|\nabla_{\bar \omega} g_{\nnu}(\bar \omega^*)\| + \|\nabla_{\bar \omega} g_\nnu(\bar \omega(t))- \nabla_{\bar \omega} g_\nnu(\ws)\| + \|\nabla_{\bar \omega}g_{\nu_t}(\bar \omega(t))- \nabla_{\bar \omega}g_\nnu(\bar \omega(t))\|}
       \end{equs}
      for all $t \in [0,\bar \tau]$ where $\bar \tau := \inf \{t~:~ \omega_0(t) \not \in [-M,1]\}$.
      Moreover, Lipschitz continuity of the potential $g_\nnu(\,\cdot\,)$ and its Lipschitz smoothness imply the existence of a $L>0$ such that $\max\{|g_\nnu(\bar \omega)- g_\nnu(\ws)|, \|\nabla g_\nnu(\bar \omega)- \nabla g_\nnu(\ws)\|\} \leq L \|\bar \omega-\ws\|$.
      Combining this with the assumed convergence of $\nu_t$ to $\nnu$, which implies $\|g_{\tilde \nu}- g_{\nu_t}\|_{C^1} < \epsilon$, we can bound the evolution of $(\omega_0(t), q(t))$ for $t \in [0,\bar \tau]$ in the perturbative regime of interest  as follows:
      \begin{equs}
          \frac \d {\d t}  \omega_0(t) & \geq \eta - \epsilon - L q(t) \label{e:boundsonw} \\
          \frac \d {\d t}  q(t) \,\,\,& \leq  |\omega_0(t)|\pq{\epsilon + L q(t)} \label{e:boundonq}
        \end{equs}

      We now show that, choosing both $\epsilon$ and a neighborhood around $\omega^* = (\omega_0^*, \bar \omega^*)$ to be small enough, the forward dynamics of $\omega^*$ will reach the set $\{\omega_0 > 0\}$ \emph{before} $q(t)$ can increase too much. More precisely, by possibly increasing the value of $L$ such that $\eta/{4L}<\tilde \epsilon$, and defining $\tau_q = \inf\{t~:~q(t)>\eta/{4L}\}$ we prove that there exists $\epsilon\in (0,\eta/4)$
       such that $\tau_q > \bar \tau$, \ie that the trajectory $\omega^*$ reaches $\mathcal G_+$ before $q(t)>\tilde \epsilon$. Note that as long as $t \in [0, \tau_q]$ and $\epsilon\in (0,\eta/4)$ the negative terms on the \abbr{RHS} \eref{e:boundsonw} can be bounded from below, and we have
       \begin{equ}
        \omega_0(t)  \geq \omega_0(0) + \frac \eta 2 t
       \end{equ}
      so that $\omega_0(t)>\omega_0(0) \geq -M$. Consequently, for all $t \in [0,\bar \tau \wedge \tau_q]$ we bound the \abbr{RHS} of \eref{e:boundonq} as  $\frac \d {\d t} q(t)< M \epsilon + LM q(t)$. Using that $q(0) = 0$ and  Gr\"onwall inequality we can bound the total excursion in the $\bar \omega$ component $q(t) \leq  \epsilon M t \exp\pq{LMt}$. Finally, setting $\tau_0 := 2(M+1)/\eta \geq - 2(\omega_0(0)-1)/\eta>\bar \tau$ so that $\omega_0(\tau_0) > 1$ we are still free to set $\epsilon$ small enough such that $\tau_q>\tau_0> \bar \tau$. Indeed, by monotonicity of the upper-bound on $q(t)$ we have
     \begin{equ}
      q(\tau_0) \leq 2\epsilon(M+1) M/\eta \exp\pq{2LM(M+1))/\eta} \leq \eta /{4L}\,,
     \end{equ}
    so that setting $\epsilon\in (0, \eta/4)$ concludes the proof.
    \end{proof}}

    We now proceed to show the needed regularity of the potential $g_\nu$ from \eref{e:g} in terms of the signed measure $h_\nu^1$ defined in \eref{e:h}. In doing so, we also prove Lipschitz smoothness of the operator $R$ defined in \eref{e:R}:

    \begin{lemma}\label{l:boundong} \label{l:lipschitzR} For any $\ro > 0$, the operator $R$ on $\mathcal F_r = \{\int \psi \nu~:~ \supp \nu \in \mathcal Q_r\}$ is Lipschitz smooth, and $DR_f$ is bounded in the supremum norm. Furthermore, for all $C_0 > 0$ there exists $\alpha>0$ and $\epsilon>0$ such that for all $\nu, \nu'$ satisfying $\|h_{\nu}^1\|_{BL}, \|h_{\nu'}^1\|_{BL} < C_0$, $\|h_{\nu}^1 - h_{\nu'}^1\|_{BL}< \epsilon$, one has
      \begin{equ}\label{e:boundong}
        \|g_{\nu} - g_{\nu'}\|_{C^1} \leq \alpha  \|h_{\nu}^1 - h_{\nu'}^1\|_{BL}\,.
      \end{equ}
    \end{lemma}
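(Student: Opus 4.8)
The plan is to exploit the fact that both $R$ and $g_\nu$ depend on $\nu$ only through the energy $f_\nu = \int \psi\,\nu = \int \phi(\cdot;\bar\omega)\,h_\nu^1(\d\bar\omega)$, and to assemble all the required Lipschitz and second-order estimates by composing elementary maps that are well behaved on the bounded set $\mathcal F_r$. The key observation is that on $\mathcal Q_r$ the energy is uniformly bounded, $\|f_\nu\|_\infty \le r\,\|\phi\|_\infty$, so that the softmax policy $\pi_\nu$ is bounded away from $0$ and $\infty$; this is precisely what makes every downstream object Lipschitz, with constants depending only on $r$ (respectively on $C_0$ in the second part). Since $f_\nu$ depends linearly on the signed measure $h_\nu^1$ through the bounded, regular kernel $\phi$, control in $f$ will translate directly into control in $\|h_\nu^1 - h_{\nu'}^1\|_{BL}$.

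For the first assertion I would first treat the softmax map $f \mapsto \pi_f$: on $\{\|f\|_\infty \le r\|\phi\|_\infty\}$ it is $C^\infty$ with bounded derivatives, because the normalizing integral is bounded above and below, so $\pi_f$ and $\log\pi_f$ are Lipschitz and Lipschitz-smooth in $f$. The genuinely nontrivial ingredient, and the main obstacle, is to establish that the MDP objects $Q_\pi$, $V_\pi$ and the discounted occupancy $\rho_\pi$ are Lipschitz (and smooth) functions of $\pi$: unlike the bandit case of \sref{s:bandits}, these involve the whole Markov dynamics through the resolvent $(I-\gamma P_\pi)^{-1}$. Using the standing assumption that $P_\pi$ is Lipschitz as an operator $\mathcal M_+^1(\SS)\to\mathcal M_+^1(\SS)$ in $\pi$, together with $\gamma<1$ so that the Neumann series converges with a bound uniform in $\pi$, the resolvent is Lipschitz and smooth in $P_\pi$, hence in $\pi$; boundedness of the reward $\bar r$ and of the entropy term $\tau\log\pi$ (bounded since $\pi$ is bounded below) then transfers this to $Q_\pi$, $V_\pi$ and $\delta(\nu)=Q_{\pi_\nu}-\tau\log\pi_\nu-V_{\pi_\nu}$. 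Composing, $R=S\circ\pi$ is Lipschitz-smooth on $\mathcal F_r$; moreover $DR_f=C_{\pi_f}[\,\cdot\,,\,Q_{\pi_f}-\tau\log\pi_f]$ as identified in the gradient-flow computation of \pref{l:wgf}, whose operator and supremum norm is bounded by $\|Q_{\pi_f}\|_\infty$ and $\|\log\pi_f\|_\infty$, both finite on $\mathcal F_r$. This settles the first sentence of the lemma.

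For the quantitative bound on $g_\nu-g_{\nu'}$ I would combine the above Lipschitz estimates with the linear dependence of $f_\nu$ on $h_\nu^1$. Writing $f_\nu(s,a)-f_{\nu'}(s,a)=\int \phi(s,a;\bar\omega)\,(h_\nu^1-h_{\nu'}^1)(\d\bar\omega)$ and using that $\phi(s,a;\cdot)$ and $\nabla_{\bar\omega}\phi(s,a;\cdot)$ are bounded and Lipschitz uniformly in $(s,a)$ by \aref{a:mf} a), the duality of the bounded-Lipschitz norm gives $\|f_\nu-f_{\nu'}\|_{C^1}\le C\,\|h_\nu^1-h_{\nu'}^1\|_{BL}$, together with the a priori bound $\|f_\nu\|_\infty\le C\,\|h_\nu^1\|_{BL}\le C\,C_0$ that keeps all constants uniform under the hypotheses $\|h_\nu^1\|_{BL},\|h_{\nu'}^1\|_{BL}<C_0$. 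I would then expand
\[
g_\nu(\bar\omega)-g_{\nu'}(\bar\omega)=\int_{\SS\times\AA}\phi(\cdot;\bar\omega)\,\bigl[\delta(\nu)\,\pi_\nu\,\rho_{\pi_\nu}-\delta(\nu')\,\pi_{\nu'}\,\rho_{\pi_{\nu'}}\bigr]
\]
and telescope the bracket into the three differences $\delta(\nu)-\delta(\nu')$, $\pi_\nu-\pi_{\nu'}$ and $\rho_{\pi_\nu}-\rho_{\pi_{\nu'}}$, each of which is a $C_0$-dependent constant times $\|f_\nu-f_{\nu'}\|$ by the first part, hence times $\|h_\nu^1-h_{\nu'}^1\|_{BL}$. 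Differentiating in $\bar\omega$ only replaces $\phi(\cdot;\bar\omega)$ by $\nabla_{\bar\omega}\phi(\cdot;\bar\omega)$, which is likewise bounded and Lipschitz, so the identical estimate controls the gradient part of the $C^1$ norm. Taking the supremum over $\bar\omega$ and collecting constants yields \eqref{e:boundong} with $\alpha=\alpha(C_0)$, the smallness of $\epsilon$ being needed only to keep $\|h_{\nu'}^1\|_{BL}$ within the range where these uniform constants apply; the remainder is routine bookkeeping of the Lipschitz constants.
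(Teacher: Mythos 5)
Your proposal is correct and follows essentially the same route as the paper's proof: reduce everything to Lipschitz estimates in the energy $f$ (softmax bounded above and below on $\mathcal F_r$, then $\pi_\nu$, $\rho_{\pi_\nu}$, $Q_{\pi_\nu}$ Lipschitz in $f$ via the Neumann series for $\sum_t \gamma^t P_\pi^t$ and the standing Lipschitz assumption on $P_\pi$), telescope the difference of the integrands defining $DR$ and $g$, and convert $\|f_\nu - f_{\nu'}\|$ into $\|h^1_\nu - h^1_{\nu'}\|_{BL}$ by duality using the $C^1$ regularity of $\phi$ in $\bar\omega$. Your resolvent formulation of the occupancy-measure estimate and your direct telescoping of $g_\nu - g_{\nu'}$ are just repackagings of the paper's Lemma~\ref{l:prepboundong} and its (I)--(III) decomposition of $\|DR_f - DR_{f'}\|$, respectively, so the two arguments coincide in substance.
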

    To prove the above lemma, we first bound some relevant quantities. Throughout, by slight abuse of notation, we denote for any function $f~:~\SS\times \AA \to \RR$
    \begin{equ}
      \|f\|_2 = \sup_{s \in \mathcal S} \int_{\AA} f(s,a)^2 \d a
  \end{equ}

    \begin{lemma}\label{l:prepboundong}
      For all $f,f' \in \mathcal F_r$ there exists $\epsilon>0$, $C', C'', C'''>0$ such that
         if $\|f-f'\|_2<\epsilon$ one has
        \begin{equs}
          \|\pi_{\nu} - \pi_{\nu'}\|_{2} & \leq C' \|f-f'\|_2 \label{e:boundonpi}\\
          \|\rho_{\nu} - \rho_{\nu'}\|_{1} & \leq C'' \|f-f'\|_2\label{e:boundonrho}\\
          \|Q_{\pi_\nu} - Q_{\pi_{\nu'}}\|_2 & \leq C'''   \|f-f'\|_2 \label{e:boundonQ}\\
        \end{equs}
    \end{lemma}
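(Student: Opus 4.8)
The plan is to prove the three estimates in turn, with everything resting on the observation that on $\mathcal F_r$ the energy $f = \int \psi\,\nu$ is uniformly bounded. Indeed, since $\psi(s,a;\omega) = \omega_0\phi(s,a;\bar\omega)$ with $\phi$ bounded by \aref{a:mf} a) and $\supp\nu \subset \mathcal Q_r = [-r,r]\times\Theta$, we have $\|f\|_\infty \leq r\|\phi\|_\infty =: B_r$, so every softmax density $\pi_\nu(s,a) = e^{f(s,a)}/\int_\AA e^{f(s,a')}\d a'$ is bounded above and below by strictly positive constants depending only on $r$ and $|\AA|$. This two-sided bound is what makes all three Lipschitz estimates uniform on $\mathcal F_r$. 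For \eqref{e:boundonpi} I would differentiate the softmax map along the interpolation $f_t = (1-t)f' + tf$. Its Fr\'echet derivative is the covariance kernel $\tfrac{\delta\pi}{\delta f}(s;a,a') = \pi_{f_t}(s,a)(\delta_a(a') - \pi_{f_t}(s,a'))$, whose operator norm in the $\|\cdot\|_2$ of the statement is controlled by the two-sided bound on $\pi_{f_t}$ and by $|\AA|$; writing $\pi_\nu - \pi_{\nu'} = \int_0^1 \tfrac{\delta\pi}{\delta f}\big|_{f_t}(f-f')\,\d t$ and taking norms yields $\|\pi_\nu-\pi_{\nu'}\|_2 \leq C'\|f-f'\|_2$ with $C'$ depending only on $r$.

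For \eqref{e:boundonrho} I would use $\rho_\pi = (I-\gamma P_\pi)^{-1}\rho_0$, viewing $\sum_t \gamma^t P_\pi^t$ as the resolvent of the substochastic operator $\gamma P_\pi$. The resolvent identity $(I-\gamma P_{\pi_\nu})^{-1} - (I-\gamma P_{\pi_{\nu'}})^{-1} = \gamma(I-\gamma P_{\pi_\nu})^{-1}(P_{\pi_\nu}-P_{\pi_{\nu'}})(I-\gamma P_{\pi_{\nu'}})^{-1}$, together with the uniform bound $\|(I-\gamma P_\pi)^{-1}\|_{1\to 1}\leq (1-\gamma)^{-1}$ (valid since $P_\pi$ is Markov), reduces the problem to bounding $\|P_{\pi_\nu}-P_{\pi_{\nu'}}\|$. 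This is precisely the assumed Lipschitz continuity of $P_\pi$ as an operator $\mathcal M_+^1(\SS)\to\mathcal M_+^1(\SS)$ with respect to the policy, which I combine with \eqref{e:boundonpi} (using Cauchy--Schwarz on the compact set $\AA$ to pass from the $L^2$-control of $\pi_\nu-\pi_{\nu'}$ to its $L^1$-control) to obtain $\|\rho_\nu-\rho_{\nu'}\|_1 \leq C''\|f-f'\|_2$.

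The third bound \eqref{e:boundonQ}, which I expect to be the main obstacle, is handled through the fixed-point characterization of the regularized $Q$-function: $Q_\pi$ is the unique fixed point of the affine $\gamma$-contraction
\[
\mathcal T_\pi Q(s,a) = \bar r(s,a) + \gamma \int_\SS \left( \int_\AA Q(s',a')\,\pi(s',\d a') - \tau \int_\AA \log\pi(s',a')\,\pi(s',\d a') \right) P(s,a,\d s').
\]
Decomposing $Q_{\pi_\nu}-Q_{\pi_{\nu'}} = \left(\mathcal T_{\pi_\nu}Q_{\pi_\nu} - \mathcal T_{\pi_\nu}Q_{\pi_{\nu'}}\right) + \left(\mathcal T_{\pi_\nu}Q_{\pi_{\nu'}} - \mathcal T_{\pi_{\nu'}}Q_{\pi_{\nu'}}\right)$ and absorbing the contraction term gives $\|Q_{\pi_\nu}-Q_{\pi_{\nu'}}\|_2 \leq (1-\gamma)^{-1}\|\mathcal T_{\pi_\nu}Q_{\pi_{\nu'}} - \mathcal T_{\pi_{\nu'}}Q_{\pi_{\nu'}}\|_2$. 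The operator sensitivity on the right, evaluated at the fixed $Q = Q_{\pi_{\nu'}}$, involves two $\pi$-dependent terms: the linear term $\int Q\,\pi$, bounded by $\|Q_{\pi_{\nu'}}\|_\infty\|\pi_\nu-\pi_{\nu'}\|$ (with $\|Q_{\pi_{\nu'}}\|_\infty$ finite since $\bar r$ and the entropy are bounded on $\mathcal F_r$), and the entropy term $\int \log\pi\,\pi$. The delicate point is the latter: since $\pi_\nu,\pi_{\nu'}$ are uniformly bounded below by a positive constant on $\mathcal F_r$, the map $x\mapsto x\log x$ is Lipschitz on the relevant range, so $\|\pi_\nu\log\pi_\nu - \pi_{\nu'}\log\pi_{\nu'}\|_2 \leq L\|\pi_\nu-\pi_{\nu'}\|_2$. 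Feeding these back through the contraction bound and invoking \eqref{e:boundonpi} yields \eqref{e:boundonQ}.

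Throughout, the smallness hypothesis $\|f-f'\|_2 < \epsilon$ is not essential for these linear estimates, since every constant is uniform on $\mathcal F_r$ thanks to the two-sided bound on $\pi_\nu$; it is harmless and convenient only for the downstream use in \lref{l:boundong}. The sole genuine care required is the bookkeeping between the three norms $\|\cdot\|_2$, $\|\cdot\|_1$ and $\|\cdot\|_\infty$, which is routine on the compact spaces $\SS$ and $\AA$.
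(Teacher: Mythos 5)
Your proposal is correct in substance but takes a genuinely different route from the paper on the main estimate. For \eref{e:boundonpi} the paper argues by direct algebra on the softmax quotient (splitting numerator and denominator and using $e^x \leq 1+K_r|x|$ on bounded ranges), whereas you integrate the Fr\'echet derivative of the softmax map along the segment $f_t=(1-t)f'+tf$; since $\mathcal F_r$ is convex (it is the linear image of a convex set of measures), the two-sided bound on $\pi_{f_t}$ holds along the path and your argument is sound, and essentially equivalent in content. For \eref{e:boundonrho}, your resolvent identity is just the closed form of the paper's telescoping expansion $\sum_t \gamma^t\sum_{j<t}\rho_0 P_\pi^j(P_\pi-P_{\pi'})P_{\pi'}^{t-j-1}$; same mechanism, more compact bookkeeping. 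The real divergence is \eref{e:boundonQ}: the paper writes $Q_\pi-Q_{\pi'}=\gamma\int P(s,a,\d s')(V_\pi-V_{\pi'})(s')$ and expands $V_\pi$ explicitly through $\rho^\pi$ and $\pi$, which forces it to invoke the occupancy bound \eref{e:boundonrho} and to control the log-partition term by hand (\eref{e:logpi}); you instead characterize $Q_\pi$ as the fixed point of the soft Bellman operator $\mathcal T_\pi$, split off the contraction part, and bound only the operator sensitivity $\mathcal T_{\pi_\nu}Q_{\pi_{\nu'}}-\mathcal T_{\pi_{\nu'}}Q_{\pi_{\nu'}}$, which depends on the policies alone. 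This is more modular — your $Q$-bound does not use the $\rho$-bound at all — and the Lipschitz treatment of $x\mapsto x\log x$ on a range bounded away from $0$ and $\infty$ cleanly replaces the paper's explicit log-partition estimate.

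One point needs repair, and it is precisely the norm bookkeeping you dismiss as routine at the end. The soft Bellman operator is a $\gamma$-contraction in the \emph{sup} norm; in the paper's norm $\|f\|_2=\sup_s\int_\AA f(s,a)^2\,\d a$ the contraction constant picks up factors of $\|\pi\|_\infty^{1/2}$ and $|\AA|^{1/2}$ and need not be below $1$, so the absorption step $\|Q_{\pi_\nu}-Q_{\pi_{\nu'}}\|_2\leq(1-\gamma)^{-1}\|\mathcal T_{\pi_\nu}Q_{\pi_{\nu'}}-\mathcal T_{\pi_{\nu'}}Q_{\pi_{\nu'}}\|_2$ is not justified as written. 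The fix is easy given compactness and the uniform two-sided bounds on $\mathcal F_r$: run the contraction argument in $\|\cdot\|_\infty$, bound the sensitivity term in $\|\cdot\|_\infty$ by $C\|f-f'\|_2$ (its two pieces are controlled by $\sup_{s'}\|(\pi_\nu-\pi_{\nu'})(s',\cdot)\|_{L^1(\AA)}\leq|\AA|^{1/2}\|\pi_\nu-\pi_{\nu'}\|_2$), and convert back at the end via $\|\cdot\|_2\leq|\AA|^{1/2}\|\cdot\|_\infty$. With that adjustment the argument is complete.
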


    \begin{proof}[Proof of \lref{l:prepboundong}]
    Throughout this proof, for simplicity of notation, we will write $\pi = \pi_{\nu}$ and $\pi' = \pi_{\nu'}$.
      Furthermore, we use that for $f \in \mathcal F_r$ there exists $C_0>0$ so that
      \begin{equ}\label{e:lubonf}
        e^{- C_0 \|\phi\|_{C^1}} \leq \exp[f(s,a)] \leq e^{C_0 \|\phi\|_{C^1}}\,,
        \end{equ}
        implying, together with the assumed absolute continuity of $\rho_0$ that $\|Q_\pi\|_\infty, \|\pi\|_\infty, \|\rho_\pi\|_\infty < \infty$.
      Setting throughout $\tau = 1$ to simplify the notation and combining the above with the pointwise upper bound $e^x < 1+K_r|x|$ for $|x|<e^{C_0\|\phi\|_{C^1}}$ we obtain
      \begin{equs}
        \|\pi - \pi'\|_{2} & \leq \left\|\frac{\exp[f(s,a)]}{\int \exp[f(s,a')] \d a'} - \frac{\exp[f'(s,a)]}{\int \exp[f'(s,a')] \d a'}\right\|_{2}\\
        & \leq \left\|\frac{\exp[f(s,a)]}{\int \exp[f(s,a')] \d a'} - \frac{\exp[f'(s,a)]}{\int \exp[f(s,a')] \d a'}\right\|_{2} + \left\|\frac{\exp[f'(s,a)]}{\int \exp[f(s,a')] \d a'} - \frac{\exp[f'(s,a)]}{\int \exp[f'(s,a')] \d a'}\right\|_{2}\\
            & \leq \left\|\frac{\exp[f'(s,a)]}{\int \exp[f(s,a')] \d a'}\right\|_\infty \pc{ \left\|1-{\exp[f(s,a)-f'(s,a)]}\right\|_{2} + \left\|\frac{\int \exp[f(s,a')] \d a'}{\int \exp[f'(s,a')] \d a'}-1\right\|_{2}}\\
            & \leq \frac{e^{2C_0 \|\phi\|_{C^1}}}{|\AA|} \pc{ K_r \left\|f(s,a)-f'(s,a)\right\|_{2} + \frac{e^{2C_0 \|\phi\|_{C^1}}}{|\AA|} \left\|{\int \bigl( \exp[f(s,a')- f'(s,a')] -1 \bigr)\d a'}\right\|_{2}}\\
            & \leq \frac{e^{2C_0 \|\phi\|_{C^1}}}{|\AA|} K_r (1+ {e^{4C_0 \|\phi\|_{C^1}}}{} ){ \left\|f-f'\right\|_2 }=: C' \|f - f'\|_2, \label{e:boundonpi2}
        \end{equs}
    where we have denoted by $|\AA|$ the Lebesgue measure of the action space $\AA$. We now proceed to establish the second bound in the statement of the lemma. In this case, denoting the $t$-steps transition probability as $P_\pi^t(s,\d s_t) = \int_{\SS^{t-1}} P_\pi(s,\d s_1) P_\pi(s_1,\d s_2)\dots  P_\pi(s_{t-1},\d s_t)$  we have
      \begin{equs}
        \|\rho_{\pi} - \rho_{\pi'}\|_{1} & = \left\|\sum_{t=1}^{\infty} \gamma^t\int_\SS{ \rho_0(\d s_0 )\pc{P_{\pi}^t(s_0,\d s)-P_{\pi'}^t (s_0,\d s) }}\right\|_{1} \\
        &\leq \sum_{t=1}^{\infty} \gamma^t \sum_{j=0}^{t-1} \left\| \rho_0 P_{\pi}^j \pc{P_{\pi}-P_{\pi'}}P_{\pi'}^{t-j-1}\right\|_{1} \,.
      \end{equs}
      Observing that for any smooth $\rho \in \mathcal M_+^1(\SS)$ for the operator norm of the difference in the above sum we have
      \begin{equs}
        \|\rho(P_{\pi}-P_{\pi'})\|_{1} &= \left\|\int_\SS \rho(\d s)\int_\AA P(s,a,\d s')\pc{\pi(s,\d a)-\pi'(s,\d a)} \right\|_{1} \\&
        \leq \|\rho\|_1 \|P\|_1 \|\pi - \pi'\|_2
         \leq \frac {(1-\gamma)^2}\gamma C''\|f - f'\|_2
      \end{equs}
      for large enough $C''$, where we used \eref{e:boundonpi2} in the last line and the Lipschitz continuity of $P$ in its second argument, and $\|P\|_1$ is the operator norm of the transition operator $\int_\AA P(s,a,\d s') \pi''(s,\d a)~:~\mathcal M_+^1(\AA)\to \mathcal M_+^1(\AA)$, which is equal to $1$. From this we conclude
    \begin{equ}
      \|\rho_{\pi} - \rho_{\pi'}\|_{1} \leq  \frac {(1-\gamma)^2}\gamma\sum_{t=1}^\infty t \gamma^t C'' \|f - f'\|_2 =  C'' \|f - f'\|_2
    \end{equ}
    Finally, defining for notational convenience $R'_\tau := \bar r - \tau D_{KL}(\pi', \bar \pi)$ we write:
    \begin{equs}
      \|Q_{\pi} - Q_{\pi'}\|_{2} & = \left\|\gamma \int P(s,a,\d s') \pc{V_{\pi}(s')-V_{\pi'}(s')} \right\|_{2}
      \\& = \left\|\gamma \int P(s,a,\d s') \pc{\int { R_{\tau}(s'',a')\rho^{\pi}(s', \d s'') \pi(\d a')-R'_{\tau}(s'',a') \rho^{\pi'}(s', \d s'') \pi'(\d a')}} \right\|_{2}
      \\& \leq \gamma \left\|{\int P(s,a,\d s') \pc{ R_{\tau}(s'',a')\rho^{\pi}(s', \d s'') \pi(\d a')-R'_{\tau}(s'',a') \rho^{\pi'}(s', \d s'') \pi'(\d a')}} \right\|_{2}
      \\& \leq
      \left\|\int { \pc{R_{\tau} - R'_{\tau}}(s'',a')\rho^{\pi}(s', \d s'') \pi(\d a')} \right\|_{\infty}
      + \left\|\int { R'_{\tau}(s'',a')(\rho^{\pi}-\rho^{\pi'})(s', \d s'') \pi(\d a')} \right\|_{\infty}
      \\&\qquad\qquad\qquad\qquad+ \left\|\int { R'_{\tau}(s'',a')\rho^{\pi'}(s', \d s'') (\pi- \pi')(\d a')} \right\|_{\infty} \label{e:partialboundonq}
    \end{equs}
    and bound each term separately letting $C_1''', C_2''', C_3'''>0$ be large enough constants. For the first,
    we have:
    \begin{equs}
      \left\|\int { \pc{R_{\tau} - R'_{\tau}}(s'',a')\rho_{\pi}(s', \d s'') \pi(s'',\d a')} \right\|_\infty &\leq \frac 1 {1-\gamma} \|\pi\|_2 \|\log \pi - \log \pi'\|_{2} \\&\leq \frac 1 {1-\gamma} \|\pi\|_2 \pc{ \|f- f'\|_{2}+\|{\log \frac{\int e^{f(s,a)}\d a}{\int e^{f'(s,a)}\d a}}\|_{2}}\\
      & \leq C_1'''\|f - f'\|_2
      \label{e:firstterm}
    \end{equs}
    where we have bounded the $\log$ term as follows:
    \begin{equs}
      \|\log \frac{\int e^{f(s,a)}\d a}{\int e^{f'(s,a)}\d a}\|_2 &= \|\log \frac{\int e^{f(s,a)}-e^{f'(s,a)}\d a}{\int e^{f'(s,a)}\d a}+1\|_2\\
      &\leq \|\log \frac{\int e^{f'(s,a)}\pc{e^{|f(s,a)- f'(s,a)|}-1}\d a}{\int e^{f'(s,a)}\d a}+1\|_2\\
      &\leq \|\log \pc{\frac{\|e^{-f'}\|_\infty{\|e^{f'}\|_\infty}}{|\AA |} \int \pc{e^{|f(s,a)- f'(s,a)|}-1}\d a+1}\|_2\\
    &  \leq \|\log \pc{\frac{\|e^{-f'}\|_\infty{\|e^{f'}\|_\infty}}{|\AA |} K_r\int |f(s,a)- f'(s,a)|\d a+1}\|_2\\
    &  \leq \|\log \pc{{\|e^{-f'}\|_\infty{\|e^{f'}\|_\infty}} K_r \|f(s,a)- f'(s,a)\|_2+1}\|_2\\
    &  \leq \|e^{-f'}\|_\infty{\|e^{f'}\|_\infty} K_r {\|f- f'\|_2} \label{e:logpi}
    \end{equs}
    For the second term in \eref{e:partialboundonq}, using the boundedness of $\|R\|_{2}<C_R$ and that $\rho^\pi - \rho^{\pi'} = \gamma \pc{\rho_\pi - \rho_{\pi'}} $ for a certain, $\rho_0$ depending on $s'$ we write
    \begin{equs}
      \left\|\int { R'_{\tau}(s'',a')(\rho^{\pi}-\rho^{\pi'})(s', \d s'') \pi(s'',\d a')} \right\|_\infty& \leq  \|\pi\|_2 \|R'_{\tau}(s'',a')\|_2 \|\rho_{\pi}-\rho_{\pi'}\|_{1}  \\&\leq C_2''' \|f - f'\|_2.\label{e:secondterm}
    \end{equs}
    We finally bound the third term by writing
    \begin{equs}
    \left\|\int { R'_{\tau}(s'',a')\rho_{\pi'}(s', \d s'') (\pi- \pi')(s'',\d a')} \right\|_\infty & \leq  \|\rho_{\pi'}\|_1 \| R'_{\tau}(s'',a')\|_{2} \|\pi- \pi'\|_{2} \\
    & \leq C_3''' \|f - f'\|_2
    \label{e:thirdterm}
  \end{equs}
    and obtain \eref{e:boundonQ} by combining \eref{e:firstterm}-\eref{e:thirdterm}.
    \end{proof}

    \begin{proof}[Proof of \lref{l:boundong}]
      We first establish the desired properties of the functional $R$. To do so we differentiate \eref{e:softmax} at $f \in L^2(\SS\times\AA)$
      \begin{equs}
        \vdd{\pi_f}{f}(s,a) & = \vd{f} \frac{e^{\tau \int f(s,a) \nu(\d \omega)}}{\int_\AA e^{\tau f(s,a)}\d a}
        = \frac1{\int_\AA e^{\tau f}\d a} \vd{f} {e^{\tau f}} - \frac{e^{\tau f}}{\pc{\int_\AA e^{\tau f}\d a}^2} \int_\AA \vd{\nu} {e^{\tau f}}\d a \\
        & = \tau \pc{f   - \int_\AA f \pi_f(s,\d a')  }\pi_f(s,a)\,.\label{e:variationald0}
      \end{equs}
      Then, combining the above with \eref{e:variationald4}  and H\"older inequality we obtain
      \begin{equ}
        \|D R_f\|_{\infty, r} = \sup_{f \in \mathcal F_r}\int (D R_f(s,a))^2 \d s \d a < \infty
      \end{equ}
      where we have used that all the terms appearing in $DR_f$ are bounded for every choice of $r>0$.

      To establish Lipschitz smoothness of $R$, letting $f, f' \in \mathcal F_r$ and denoting, to simplify notation, $\pi = \pi_f $ and $ \pi' = \pi_{f'} $, we proceed to bound the operator norm by splitting the \abbr{rhs} as
      \begin{equs}
        \|DR_f - DR_{f'}\| & \leq \sup_{\ell \in L^2(\SS \times \AA)~:~\|\ell\| =1} \pd{C_{\pi}[\ell,Q_\pi - \tau \log \pi] - C_{\pi'}[\ell,Q_{\pi'} - \tau \log \pi']} \\
        &\leq \sup_{\ell \in L^2(\SS \times \AA)~:~\|\ell\| =1} [(I) + (II) + (III)]
      \end{equs}
      and considering the resulting terms separately.

      First of all, defining throughout by slight abuse of notation $\delta(\pi) := Q_\pi - \tau \log \pi$, we have
      \begin{equs}
        (I) & := |\dtp{\int \ell(s,a) \pi'(s,\d a) - \int \ell(s,a) \pi(s,\d a) , \delta(\pi')}_{\pi'} |\\
        & \leq \int_\SS \pc{\int_\AA \ell(s,a) (\pi(s,a) - \pi'(s,a)) \d a  }\pc{ \int \delta(\pi') \pi'(\d a)}{\rho_{\pi'}(s)}\d s \\
        & \leq \| \ell \|_2 \|\pi - \pi'\|_{2} \| \delta(\pi') {\pi'} \|_2 \|\rho_{\pi'}\|_1 \\
        & \leq C \|\ell\|_2\|f - f'\|_{2}, \label{e:I}
      \end{equs}
      where the last line was obtained using \eref{e:boundonpi} and boundedness from above and below of $\pi, Q_\pi$ in $\mathcal F_r$.
      We further write, for a $K < \infty$ large enough
       \begin{equs}
          (II) & := |\dtp{\ell(s,a) - \int \ell(s,a') \pi(s,\d a'), \delta(\pi')}_{\pi'} - \dtp{\ell(s,a) - \int \ell(s,a') \pi(s,\d a'), \delta(\pi')}_{\pi}|\\
          & \leq  {|\int \delta(\pi') \ell(s,a)\rho_{\pi}( s)(\pi - \pi')(s,\d a) \d s| + |\int \ell(s,a) \delta(\pi') (\rho_{\pi}- \rho_{\pi'})( s) \pi'(s,\d a) \d s|}\\
          & \leq \|\delta(\pi') \|_2 \|\rho_{\pi'}\|_1 \|\ell\|_2 \pc{\|\pi - \pi'\|_2 + \|\rho_{\pi}- \rho_{\pi'}\|_1}\\
            & \leq   K  \|\ell\|_2 \|f-f'\|_{2},\label{e:II}
        \end{equs}
        where we have used Cauchy-Schwartz inequality together with  \eref{e:boundonpi} and \eref{e:boundonrho}.
        Finally, using \eref{e:boundonQ} and \eref{e:logpi}, we bound for $K< \infty$ possibly larger than above
    \begin{equs}
      (III) & := |\dtp{\ell(s,a) - \int \ell(s,a')\pi(s,\d a'), \delta(\pi) - \delta(\pi')}_{\pi}| \\
      &\leq  \|\rho_\pi\|_1\| \pi\|_2 \|\ell\|_2  \| \delta(\pi) - \delta(\pi')\|_{2}
      \\
      & \leq \|\rho_\pi\|_1\| \pi\|_2 \|\ell\|_2 \pc{\|Q_\pi - Q_{\pi'}\|_{2} + \|V_\pi - V_{\pi'}\|_{2} + \tau \|\log\frac \pi{\pi'}\|_{2}}\\
            & \leq \|\rho_\pi\|_1\| \pi\|_2 \|\ell\|_2 \pc{(1+\|\pi'\|_\infty)\|Q_\pi - Q_{\pi'}\|_{2} + \|Q_\pi\|_2 \|\pi-\pi'\|_2 + \tau \|\log\frac \pi{\pi'}\|_{2}}\\
        & \leq K \|\ell\|_2  \|f-f'\|_{2}. \label{e:III}
    \end{equs}
      Combining \eref{e:I}, \eref{e:II} and \eref{e:III} we obtain that
      \begin{equ}
        L_{D R} = \sup_{f,f'\in \mathcal F_r} \frac{\|DR_f - DR_{f'}\|}{\|f-f'\|_2} < \infty
      \end{equ}
proving the Lipschitz smoothness claim.

      Proceeding to the proof of \eref{e:boundong}, combining the Lipschitz smoothness of $R$ on bounded sets, the identity $\psi(s,a;(1,\bar \omega)) = \phi(s,a; \bar \omega)$ and the boundedness of the set $\{\int \psi \nu~:~ \nu \in \mathcal P_2(\Omega), |h_\nu^1|<C_0\}$ we obtain
        \begin{equs}\label{e:boundonf}
          \|f_{\nu} - f_{\nu'}\|_{2} & = \left\|\int \psi(\cdot;\omega) ({\nu} - {\nu'} ) (\d \omega)\right\|_{2}\\
          &= \left\|\int \phi(\cdot;\omega) (h_{\nu}^1 - h_{\nu'}^1 ) (\d \bar \omega)\right\|_{2} \\
          &\leq  \sup_{\ell \in L^2(\SS \times \AA), \|\ell\|\leq 1} \int \int \ell(s,a)\phi(s,a;\bar \omega)\d s \d a (h_{\nu}^1 - h_{\nu'}^1 ) (\d \bar \omega)\\
          & \leq \|\phi\|_{C^1} \|h_{\nu}^1 - h_{\nu'}^1\|_{BL}\,,
        \end{equs}
    which combined with the $\|\phi\|_{C^1}$-Lipschitz continuity of the map $\int \ell(s,a)\phi(s,a;\bar \omega)\d s \d a$ and with the Lipschitz smoothness of $R$ concludes the proof.
    \end{proof}

\end{document}